\documentclass{article}
\usepackage[table]{xcolor}
\usepackage{microtype}
\usepackage{graphicx}
\usepackage{subcaption}
\usepackage{booktabs} 

\usepackage{hyperref}

\usepackage[accepted]{icml2026}

\usepackage{amsmath}
\usepackage{amssymb}
\usepackage{mathtools}
\usepackage{amsthm}
\usepackage{wrapfig}
\usepackage{caption} 
\usepackage{tocloft}
\usepackage{amssymb}
\usepackage{fontawesome5}
\usepackage{mathrsfs}
\usepackage{multirow}
\usepackage{tabularx}

\usepackage{adjustbox}

\newtheorem{prop}{Proposition}

\definecolor{darkblue}{HTML}{1A5276}
\definecolor{lightblue}{HTML}{85C1E9}
\definecolor{s3green}{HTML}{27AE60}

\newcommand{\darkblue}[1]{\textcolor{darkblue}{#1}}
\newcommand{\lightblue}[1]{\textcolor{lightblue}{#1}}
\newcommand{\sgreen}[1]{\textcolor{s3green}{#1}}
\usepackage[capitalize,noabbrev]{cleveref}

\theoremstyle{plain}

\theoremstyle{definition}

\theoremstyle{remark}

\usepackage[textsize=tiny]{todonotes}
\icmltitlerunning{S3GNN: Efficient Global Mixing and Local Message Passing for Long-Range
Graph Learning}

\begin{document}

\twocolumn[
  \icmltitle{S$^3$GNN: Efficient Global Mixing and Local Message Passing for Long-Range Graph Learning}



  \icmlsetsymbol{equal}{*}

  \begin{icmlauthorlist}
    \icmlauthor{Dai Shi}{equal,yyy}
    \icmlauthor{Luke Thompson}{equal,comp}
    \icmlauthor{Linhan Luo}{equal,comp}
    \icmlauthor{Lequan Lin}{comp}
    \icmlauthor{Andi Han}{comp}\\
    \icmlauthor{Junbin Gao}{comp}
    \icmlauthor{José Miguel Hernández Lobato}{yyy}
  \end{icmlauthorlist}

  \icmlaffiliation{yyy}{Department of Engineering, University of Cambridge, Cambridge, UK.}
  \icmlaffiliation{comp}{University of Sydney, Australia.}

  \icmlcorrespondingauthor{Dai Shi}{ds2213@cam.ac.uk}

  \icmlkeywords{Machine Learning, ICML}

  \vskip 0.3in
]



\printAffiliationsAndNotice{}  

\begin{abstract}

  Message-passing neural networks (MPNNs) often suffer from an information bottleneck when capturing long-range dependencies, leading to the oversquashing (OSQ) phenomenon. Alongside spatial connectivity enrichment (e.g., rewiring), recent studies have shown that spectral filtering can yield strong long-range learning outcomes, as spectral operators enable global information mixing that alleviates OSQ. These approaches achieve this either by stabilizing the Jacobian energies in deep propagation or by guaranteeing OSQ mitigation under strong theoretical assumptions. We revisit these conclusions and show that the associated Jacobian sensitivity lower bound is generally difficult to achieve in practice. We then propose S$^3$GNN, which mitigates OSQ without such restrictive assumptions by lightweightly reintroducing omitted components with substantially lower computational complexity, while standard stability constraints on feature transformations remain effective under our new dynamics. Extensive experiments across diverse domains (e.g., long-range benchmarks, KGQA, and mesh-based fluid dynamics) demonstrate that S$^3$GNN achieves up to an order-of-magnitude error reduction with up to 50\% fewer parameters. Our code can be found in \url{https://github.com/EEthanShi/S3-GNN.git}.
  

\end{abstract}

\section{Introduction}
Graph neural networks (GNNs) have emerged as the fundamental deep learning models for handling graph-structured data \citep{wu2020comprehensive,ji2021survey,sato2020survey,rusch2023survey}. The oversquashing phenomenon, one of the recently identified problems within GNN dynamics \citep{shi2023exposition,topping2021understanding,alon2020bottleneck}, refers to the degradation of learning outcomes when information from distant nodes cannot be effectively communicated through limited message-passing channels, causing long-range dependencies to be compressed and lost in feature propagations. The mainstream of the research in mitigating the OSQ problem refers to the rewiring techniques, which enrich the graph connectivity based on some graph indicators, such as curvature \citep{topping2021understanding,fesser2023augmentations,nguyen2023revisiting} and spectral gap \citep{black2023understanding,deac2022expander,new_karhadkar2023fosr}, to name a few.

Rather than spatially enriching graph connectivity, recent studies \citep{geisler2024spatio,hariri2025return} have illustrated the potential of resolving OSQ with the help of spectral filtering due to its advantage of providing global long-range information. 
Such spectral methods have achieved excellent performance by (i) stabilizing the weight matrices to prevent vanishing/exploding Jacobian energies in depth, as in \citet{hariri2025return}, or by (ii) combining spatial and spectral dynamics, where the theoretical guarantees for OSQ mitigation typically rely on strong conditions \citep{geisler2024spatio}.



In this paper, we revisit the OSQ vanishing analysis in \citep{geisler2024spatio} and verify that the corresponding theoretical lower bound is \textit{difficult to attain} in practice. We then demonstrate that effective alleviation of OSQ can still be achieved after relaxing these restrictive conditions, by carefully reintroducing previously omitted model components in a lightweight manner with \textit{significantly lower computational complexity}. Finally, we incorporate stability constraints on the feature transformation into our new model and show that such constraints continue to play a stabilizing role on the Jacobian energy, thereby yielding a simple yet effective architecture for mitigating OSQ in practice. We summarize our contributions as follows.
\begingroup
\setlength{\itemsep}{2pt}
 \setlength{\parskip}{0pt}
 \setlength{\parsep}{0pt}
 \setlength{\topsep}{2pt}
 \setlength{\partopsep}{0pt}
\begin{itemize}
    \item \textbf{Identification on the gap between theory and practice.}
    We revisit the OSQ-vanishing analysis of the spectral GNNs, e.g., \citep{geisler2024spatio} and show that the associated theoretical lower bound is difficult to attain under practical implementations.

    \item \textbf{A lightweight global-mixing dynamics with non-vanishing influence.}
    We propose S$^3$GNN, which reintroduces spatial propagation and a simple feature transform on top of a low-rank global mixing term, enabling efficient global information exchange with per-layer complexity comparable to GCN and \textit{without expensive eigendecomposition}; although motivated by spectral filtering, our model is implemented entirely in the spatial domain and admits a distance-independent non-vanishing influence lower bound.
\end{itemize}
\endgroup

\vspace{-8pt}
\begingroup
\setlength{\itemsep}{5pt}
 \setlength{\parskip}{-1pt}
 \setlength{\parsep}{-1pt}
 \setlength{\topsep}{5pt}
 \setlength{\partopsep}{6pt}
\begin{itemize}
    \item \textbf{Stabilized propagation via constrained feature transforms.}
    Inspired by stability-constrained spectral GNNs, we incorporate antisymmetric constraints on the feature transformation and show that the resulting Jacobian energy is still stable in our proposed model, yielding a simple yet effective OSQ-mitigating architecture. 
    \item \textbf{Strong empirical performance with reduced cost.}
    We evaluate our proposed model via massive experiments. Our model achieves \textit{orders-of-magnitude} better results in both synthetic and real-world datasets with cheaper computational costs. Furthermore, we also show that our model can be a strong backbone model in multiple situations, such as knowledge graph question answering (KGQA), topological interpolation, and fluid dynamics prediction.   
\end{itemize}
\endgroup

\section{Background}

In this work, we denote a graph as a tuple $\mathcal G = (\mathcal V, \mathcal E)$ where $\mathcal V$ and $\mathcal E$ denote the set of $N$ nodes and $m$ edges, respectively. In most cases, we assume that $\mathcal G$ is undirected, i.e., if $(i,j) \in \mathcal E$, then $(j,i) \in \mathcal E$.
We let $\mathbf A \in \mathbb R^{N\times N}$ be the unweighted adjacency matrix, and the normalized adjacency matrix is denoted by $\widehat{\mathbf{A}} = {\mathbf D}^{-1/2}\mathbf{A}{\mathbf D}^{-1/2}$ where $\mathbf D$ is the degree matrix. We further define $\mathbf L\in \mathbb R^{N\times N}$ as the graph Laplacian matrix with $\mathbf L = \mathbf D - \mathbf A$.
Accordingly, the normalized graph Laplacian is given by $\widehat{\mathbf L} = \mathbf{I} - \widehat{\mathbf{A}}$, and has eigendecomposition such that $\widehat{\mathbf L} = \mathbf U\boldsymbol{\Lambda} \mathbf U^\top$.
In this paper, we let $\{ (\lambda_i, \mathbf u_i) \}_{i=1}^N$ be the set of eigenvalue and eigenvector pairs of $\widehat{\mathbf L}$. Finally, A {connected component} of $\mathcal G$ is a maximal subset of nodes $\mathcal C\subseteq\mathcal V$
such that any two nodes in $\mathcal C$ are connected by a path in $\mathcal G$.
Suppose $\mathcal G$ has $k$ connected components $\{\mathcal C_r\}_{r=1}^k$ with $|\mathcal C_r| = n_r$. It is well known that for unnormalized Laplacian $\mathbf L$, the eigenvalue $0$ has multiplicity $k$, and a  orthonormal basis of the $0$-eigenspace is given by $\{\mathbf v^{(r)}\}_{r=1}^k$, where $(\mathbf v^{(r)})_i= \frac{1}{\sqrt{n_r}}$ if $i\in\mathcal C_r$, and $0$ otherwise. 


\subsection{Graph Neural Networks}
In general, there are two types of GNNs. Spatial message passing neural networks (MPNNs) propagate node features by gathering their \textit{local} neighboring information \citep{gilmer2017neural}, resulting in the following dynamic 
\begin{align}\label{mpnn}
    \mathbf{h}_i{(\ell+1)} = \phi_\ell  (\mathbf{h}_i{(\ell)}, \sum_{j \in \mathcal N_i} \widehat{\mathbf{A}}_{ij}\, \psi_\ell(\mathbf{h}_i{(\ell)}, \mathbf{h}_j{(\ell)})),
\end{align}
where $\mathbf h_i(\ell)$ represents the feature of node $i$ at layer $\ell$, with $\mathbf{h}_i{(0)} = \mathbf x_i$, and $\psi_\ell : \mathbb R^{d_\ell} \times \mathbb R^{d_\ell} \rightarrow \mathbb R^{d_\ell'}$, $\phi_\ell: \mathbb R^{d_\ell} \times \mathbb R^{d_\ell'} \rightarrow \mathbb R^{d_{\ell +1}}$ are channel-mixing and update functions, respectively. One typical example of MPNN is GCN and its variants \citep{new_kipf2017semi}, with the dynamic defined as $\mathbf H(\ell+1) = \sigma(\widehat{\mathbf{A}}\mathbf H(\ell)\mathbf W(\ell))$. Another routine of GNN is to conduct feature propagation through \textit{global} spectral filtering on the eigenspace of $\mathbf L$. That is 
\begin{align}\label{spectral_gnn}
    \mathbf H(\ell+1) = \mathbf U (g_\theta(\boldsymbol{\Lambda}))\mathbf U^\top \mathbf H(\ell)\mathbf W(\ell),
\end{align}
where $g_\theta(\cdot) : \mathbb R\to \mathbb R$ is the learnable spectral filtering function parameterized by $\theta$, and $g_\theta(\boldsymbol{\Lambda)} =\mathrm{diag}(g_\theta(\lambda_i))$.
To avoid the $\mathcal{O}(N^3)$ eigendecomposition in Eq.~\eqref{spectral_gnn}, 
the ChebNet in \citep{defferrard2016convolutional} uses
$\mathbf U\, g_\theta(\boldsymbol{\Lambda})\,\mathbf U^\top \mathbf H
\;\approx\;
\sum_{p=0}^{K} \beta_p\, T_p(\widetilde{\mathbf L})\,\mathbf H$,
where $T_p(\cdot)$ denotes the Chebyshev polynomial of degree $p$, $\{\beta_p\}_{p=0}^K$ are the learnable Chebyshev coefficients, 
and $\widetilde{\mathbf L}:=\frac{2}{\lambda_{\max}}\mathbf L-\mathbf I$ is the rescaled Laplacian
(with $\lambda_{\max}$ the largest eigenvalue of $\mathbf L$).

\subsection{Oversquashing in GNNs}\label{sec:osq_gnn}
OSQ was first characterized as an information bottleneck in message passing \citep{alon2020bottleneck} and later quantified via the Jacobian sensitivity score \citep{topping2021understanding}:
\begin{align}
    \mathrm{OSQ}^{(\ell)}(i,s) = \left\| \frac{\partial \mathbf{h}_i{(\ell)}}{\partial \mathbf x_s} \right \|, 
\end{align}
which typically decays rapidly with depth $\ell$, limiting long-range communication and degrading performance on tasks that rely on distant interactions \citep{shi2023exposition}. 
Motivated by their global filtering behavior, spectral GNNs such as ChebNet \citep{defferrard2016convolutional}, Stable-ChebNet \citep{hariri2025return}, and S$^2$GNN \citep{geisler2024spatio} have recently shown strong performance on long-range benchmarks. In particular, Stable-ChebNet follows
\begin{align}\label{eq:stable_chebnet}
\mathbf H{(\ell+1)}
=
\mathbf H{(\ell)}
+
\epsilon\sum_{p=0}^{K}
T_p(\widetilde{\mathbf L})\,\mathbf H{(\ell)}\widehat{\mathbf W}_p(\ell),
\end{align}
where $\epsilon>0$ is the step size and $\widehat{\mathbf W}_p(\ell)$ is antisymmetric, i.e., $-\widehat{\mathbf W}_p(\ell) = \widehat{\mathbf W}^\top_p(\ell)$. The dynamic of S$^2$GNN can be written as
\begin{align}\label{eq:s2gnn}
     \mathbf H(\ell+1) = \mathbf U\left( g^{(\ell)}_{{\theta}}(\boldsymbol{\Lambda})\odot \left[\mathbf U^\top  f^{(\ell)}_{\widetilde{\theta}} (\mathbf H(\ell))\right]\right) + \widehat{\mathbf A}\mathbf H(\ell) \mathbf W(\ell),
\end{align}
where $\odot$ denotes the element-wise product and $f_{\widetilde{\theta}}^{(\ell)} : \mathbb R^{N \times d_\ell} \to \mathbb R^{N \times d_\ell}$ is a feature transformation function. We conduct a literature review on the MPNN methods for mitigating the OSQ problem in the Appendix~\ref{append:related_works}.

\section{Method}

\subsection{Gap Between Theory and Practice}\label{sec:gap}
In this section, we show how our S$^3$GNN model is designed. Based on the description of the OSQ problem aforementioned, to ensure an effective \textit{information transaction} between node pairs (so that the OSQ problem can be diminished), the GNN dynamic shall possess a positive lower bound, i.e., $\left\| \frac{\partial \mathbf{h}_i{(\ell)}}{\partial \mathbf x_s} \right \| > 0$ for any layer $\ell$.
Although remarkable 
conclusions have been explored by delivering upper bounds of  $\left\| \frac{\partial \mathbf{h}_i{(\ell)}}{\partial \mathbf x_s} \right \|$ via different dynamics and rewiring methods \citep{topping2021understanding,banerjee2022oversquashing,black2023understanding}, a lower bound is only presented for the dynamic of S$^2$GNN described in Eq.~\eqref{eq:s2gnn}. Specifically, consider the dynamic in Eq.~\eqref{eq:s2gnn}: if one sets (i) $f^{(\ell)}_{\widetilde{\theta}}=\mathrm{id}$;
(ii) $\widehat{\mathbf A}\mathbf H(\ell)\mathbf W(\ell)=0$;
(iii) chooses the spectral filter such that
$g^{(\ell)}_\theta(\lambda)=C(\theta)>0$ when $\lambda=0$,
and $g^{(\ell)}_\theta(\lambda)=0$ when $\lambda>0$,
where $C(\theta)$ is the positive filtering coefficient determined by the model parameters, then for any $i,s\in\mathcal C_r$, we have the lower bound as
\begin{equation}\label{eq:s2_lower_bound}
\Bigl\|\frac{\partial \mathbf h_i(\ell)}{\partial \mathbf x_s}\Bigr\|
\;\ge\;
\frac{\,C^\ell(\theta)}{2|\mathcal E_{{\mathcal C}_r}|},
\end{equation}
where $|\mathcal E_{{\mathcal C}_r}|$ is the number of edges in $\mathcal C_r$ \footnote{We note that the original lower bound in \citep{geisler2024spatio} is defined via entry-wise 1-norm (Theorem 2). Since $\|\cdot\|_2 \ge \frac{1}{d}\|\cdot\|_{L_1}$ for any $d\times d$ matrix, we have Eq.~\eqref{eq:s2_lower_bound}.}. One can check that under assumptions (i)-(iii), the S$^2$GNN dynamic becomes the simple diagonal spectral filtering dynamic 
\begin{align}\label{eq:reduced_s2}
    \mathbf H(\ell+1)
&=
\mathbf U\,\mathrm{diag} \big(g^{(\ell)}_\theta(\boldsymbol{\Lambda})\big)\,\mathbf U^\top \mathbf H(\ell) \notag\\
&=
C(\theta)\,
\mathbf U\,\mathrm{diag} \big(\mathbf{1}_{\{\lambda_i=0\}}\big)\,\mathbf U^\top \mathbf H(\ell),
\end{align}
which is substantially different from the concrete implementation of S$^2$GNN in Eq.~\eqref{eq:s2gnn}. 
Accordingly, one may want to exploit the following question. 
\begin{center}
 \textit{Can S$^2$GNN really exceed the theoretical lower bound in Eq.~\eqref{eq:s2_lower_bound} if its concrete dynamic is that of Eq.~\eqref{eq:s2gnn}?} 
\end{center}
In this spirit, we test the dynamics in Eq.~\eqref{eq:s2gnn} and Eq.~\eqref{eq:reduced_s2} in \textit{peptides-struct} dataset, a gold-standard long-range graph benchmarks \citep{dwivedi2022long}. Both models are trained via the MAE loss, with the Jacobian norms computed prior to the global pooling layer. We note that given most of the graphs in \textit{peptides-struct} are connected, yielding there is only one 0-eigenvalue to be filtered in Eq.~\eqref{eq:reduced_s2}. 
Accordingly, in the non-normalized Laplacian setting, one can verify that $\Bigl\|\frac{\partial \mathbf h_i(\ell)}{\partial \mathbf x_s}\Bigr\| = \frac{C^\ell(\theta)}{2N}$, i.e., a constant. We followed exact hyperparameter settings for S$^2$GNN, yielding test MAE over 5 runs as $0.2487 \pm 0.0019$ (33.5s per epoch) and $0.6824 \pm 0.0149$ (0.46s per epoch) for dynamics in Eq.~\eqref{eq:reduced_s2}. The Jacobian norm distribution (log scale) of one test graph sample (more examples are in Appendix \ref{append:evidence}) is shown in Figure~\ref{fig:osq_bound_compare}. We observe many more \darkblue{\textbf{below-bound values}} than \lightblue{\textbf{above-bound values}}, suggesting a violation between theory in the ideal situation and real practice, where the \textit{computational complexity} is much higher. For example, for implementing Eq.~\eqref{eq:s2gnn}, rather than conducting simple diagonal filtering like in Eq.~\eqref{eq:reduced_s2}, $g_\theta$ is parameterized by the composition of Gaussian smearing and linear transformation on a specific spectral truncation (approximately $70\%$ of $\boldsymbol{\Lambda}$ from lowest), and the feature transformation $f_{\widetilde{\theta}}$ is usually implemented with a non-linear gating mechanism.

\subsection{Lightweight Global Mixing with Non-Vanishing Influence}
Having observed the phenomenon in Section~\ref{sec:gap}, it is natural for us to explore the following. 

\begin{center}
   \textit{Can one maintain the advantage in Eq.~\eqref{eq:reduced_s2} for the OSQ problem, while reintroducing the sacrificed components with significantly lower complexity?} 
\end{center}

\begin{figure}[t]
    \centering
    \includegraphics[width=1\linewidth]{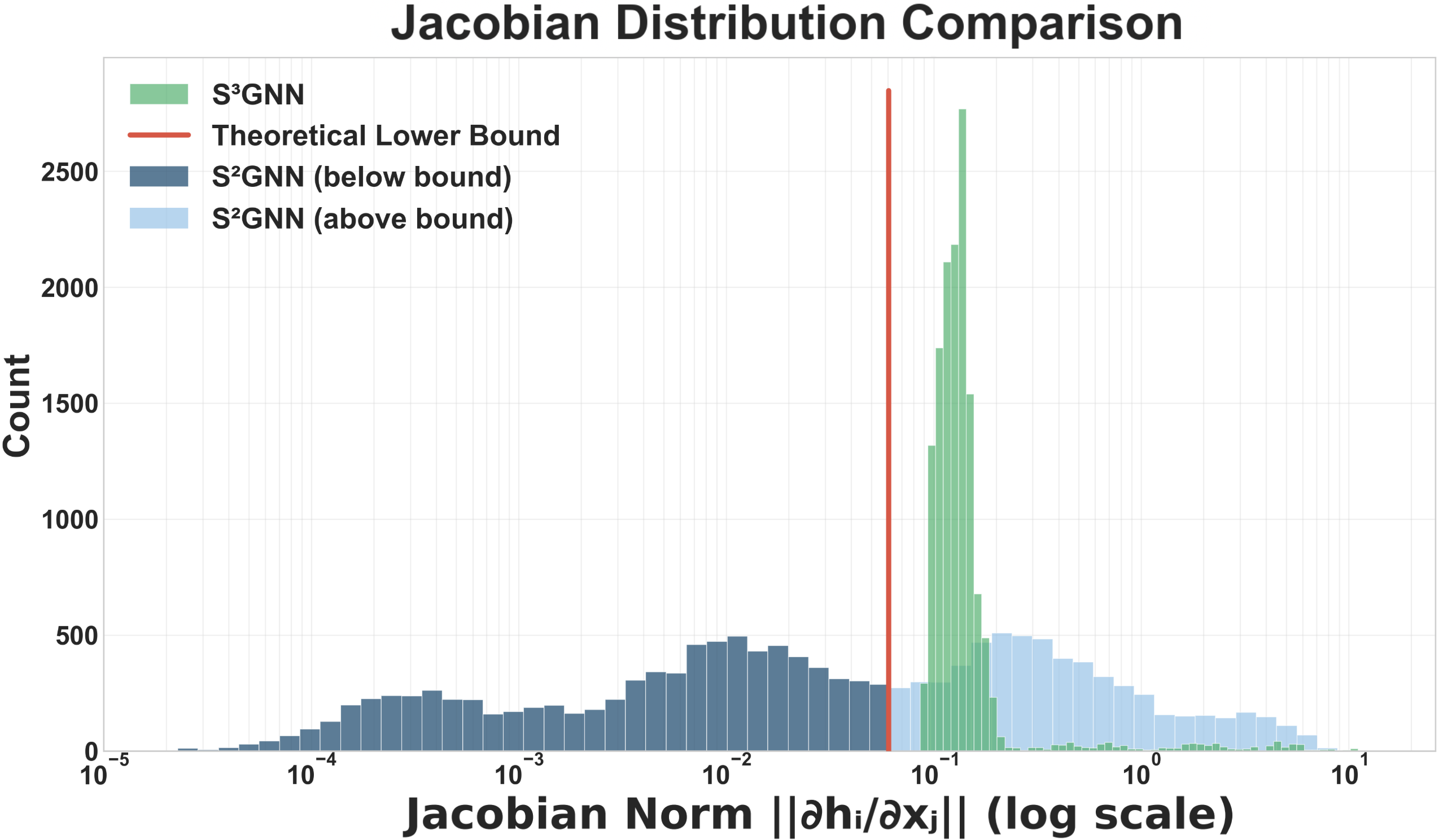}
    \caption{Jacobian norm distribution comparison between S$^2$GNN, diagonal filtering in Eq.~\eqref{eq:reduced_s2}, and our proposed S$^3$GNN.}
    \label{fig:osq_bound_compare}
\end{figure}

Accordingly, we consider the following continuous-time dynamics. 
\begin{align}
\frac{\partial \mathbf H(t)}{\partial t}
\!=\!
\mathbf U\mathrm{diag}\big(g^{(t)}_\theta(\boldsymbol{\Lambda})\big)\mathbf U^\top \mathbf H(t)\mathbf W(t)
\!+ \!
\widehat{\mathbf A}\mathbf H(t)\mathbf W(t),
\end{align}
where we reintroduce simple feature transformation to a \textit{linear} map, $f_{\widetilde{\theta}}(\mathbf H)=\mathbf H\mathbf W_{\widetilde{\theta}}$,
and keep the spatial part in the model dynamics. In addition, we maintain $g_\theta(\lambda)=0$ for $\lambda>0$ as in Eq.~\eqref{eq:reduced_s2}. Applying the forward Euler scheme with step size $\epsilon$ yields the discrete update
$\mathbf H{(\ell+1)}
=\mathbf H{(\ell)}
+
\epsilon\Big(
\mathbf U\,\mathrm{diag}\big(g^{(\ell)}_\theta(\boldsymbol{\Lambda})\big)\,\mathbf U^\top \mathbf H{(\ell)}\mathbf W{(\ell)}
+
\widehat{\mathbf A}\mathbf H{(\ell)}\mathbf W{(\ell)}
\Big)$.
One can check that this dynamic
substantially reduces the computational cost in S$^2$GNN. In addition, 
one can write
$\mathbf P_\theta = \mathbf U\,\mathrm{diag}\big(g_\theta(\boldsymbol{\Lambda})\big)\,\mathbf U^\top
=
\sum_{r=1}^k \alpha_{\theta,r}\,\mathbf v^{(r)}{\mathbf v^{(r)}}^\top,
$
where $\{\mathbf v^{(r)}\}_{r=1}^k$ are the orthonormal basis vectors of the $0$-eigenspace of $\mathbf L$, and 
$(\mathbf v^{(r)}{\mathbf v^{(r)}}^\top)_{ij}=1/n_r$ for $i,j\in\mathcal C_r$ and $0$ otherwise.
Therefore, the propagation can be implemented \textbf{without computing eigendecomposition}.
As a result, each layer only involves (i) a \textit{sparse neighborhood aggregation}, and (ii) a
\textit{low-rank global mixing}, resulting in
\begin{align}\label{eq:s3_initial}
\mathbf H{(\ell+1)}
=
\mathbf H{(\ell)}
+
\epsilon\Big(
\mathbf P_\theta{(\ell)}\mathbf H{(\ell)}\mathbf W{(\ell)}
+\widehat{\mathbf A}\mathbf H{(\ell)}\mathbf W{(\ell)}
\Big),
\end{align}
which shows the computational complexity of the dynamic comparable to classic GCNs. More importantly, for each connected component $\mathcal C_r$, one may consider $\mathbf P_\theta$ as adding a
\textit{scalar-reweighted fully-connected} subgraph, and this setting ensures \textit{effective information transaction} within each connected component. 
This aligns with the spirit of methods which densify graph connectivity through rewiring \citep{topping2021understanding}
and graph transformers \citep{wu2023difformer} in mitigating OSQ. Formally, one can derive the following conclusion (proof in Appendix).

\begin{prop}[Non-vanishing influence (existence direction)]
\label{prop:influence_x_to_h_exist_prod}
Consider the dynamics in Eq.~\eqref{eq:s3_initial}.
Fix a connected component $\mathcal C_r$ with $|\mathcal C_r|=n_r$.
Assume that the feature transform $\mathbf W{(\ell)}\in\mathbb R^{d\times d}$ is non-degenerate, i.e.,
$\sigma_{\min}(\mathbf W{(\ell)})>0\,\, \forall \ell$,
and that the mixing coefficient satisfies $\alpha_{\theta,r}^{(\ell)}>0$.
Then for any $i,s\in\mathcal C_r$, layer number $\ell$ such that
\begin{equation}
\label{eq:influence_x_to_h_exist_prod}
\Bigl\|
\frac{\partial \mathbf h_i{(\ell)}}{\partial \mathbf x_s}
\Bigr\|
\;\ge\;
\prod_{p=0}^{\ell-1}
\Bigl(
\varepsilon\,\frac{\alpha_{\theta,r}^{(p)}}{n_r}\,
\sigma_{\min}\!\big(\mathbf W{(p)}\big)
\Bigr).
\end{equation}
In particular, the lower bound is independent of the graph distance between
$i$ and $s$ within $\mathcal C_r$.
\end{prop}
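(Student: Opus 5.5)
The plan is to exploit that Eq.~\eqref{eq:s3_initial} is \emph{linear} in $\mathbf H(\ell)$. Then $\partial \mathbf h_i(\ell)/\partial \mathbf x_s$ is a deterministic $d\times d$ matrix that does not depend on $\mathbf X$, and it can be written in closed form. Writing $\mathbf M^{(p)}:=\mathbf P_\theta^{(p)}+\widehat{\mathbf A}$ (entrywise nonnegative) and using row-vector features, one layer reads
\[
\mathbf h_i(p+1)=\mathbf h_i(p)+\varepsilon\sum_{j}\mathbf M^{(p)}_{ij}\,\mathbf h_j(p)\,\mathbf W(p).
\]
Differentiating with respect to $\mathbf x_s$ gives the Jacobian recursion
\[
\mathbf J_{is}(p+1)=\mathbf J_{is}(p)+\varepsilon\sum_j \mathbf M^{(p)}_{ij}\,\mathbf J_{js}(p)\,\mathbf W(p)^{\top},
\qquad \mathbf J_{is}(0)=\delta_{is}\mathbf I .
\]
Unrolling over $p=0,\dots,\ell-1$ expresses $\mathbf J_{is}(\ell)$ as a sum indexed by subsets $S\subseteq\{0,\dots,\ell-1\}$ of the layers in which the $\varepsilon\mathbf M$-branch (rather than the identity branch) is taken. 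Each summand has the form $c_S(i,s)\,\mathbf W_S$, where $c_S(i,s)\ge 0$ is an $(i,s)$ entry of a product of nonnegative matrices and $\mathbf W_S$ is the ordered product of the corresponding $\mathbf W(p)^\top$.

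The distinguished summand is $S=\{0,\dots,\ell-1\}$, whose matrix factor is $\mathbf W(\ell-1)^\top\cdots\mathbf W(0)^\top$. Its scalar coefficient is at least
\[
\varepsilon^\ell\,(\mathbf P^{(\ell-1)}\cdots\mathbf P^{(0)})_{is}=\varepsilon^\ell\prod_p \frac{\alpha^{(p)}_{\theta,r}}{n_r}\cdot n_r^{\ell-1}\cdot\frac{1}{n_r^{\ell-1}}\;\ge\;\prod_p \varepsilon\,\frac{\alpha_{\theta,r}^{(p)}}{n_r}.
\]
This follows from nonnegativity of $\widehat{\mathbf A}$ (dropping the $\widehat{\mathbf A}$ contributions only decreases it) and the idempotence-type identity $\mathbf v^{(r)}{\mathbf v^{(r)}}^\top\mathbf v^{(r)}{\mathbf v^{(r)}}^\top=\mathbf v^{(r)}{\mathbf v^{(r)}}^\top$. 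Combined with the submultiplicative lower bound $\|\mathbf A\mathbf B\|\ge\|\mathbf A\|\,\sigma_{\min}(\mathbf B)$ applied iteratively, this summand alone has norm at least the right-hand side of Eq.~\eqref{eq:influence_x_to_h_exist_prod}. The bound involves only $n_r$, never $\mathrm{dist}(i,s)$, which gives the distance-independence remark.

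The main obstacle is passing from this summand to the \emph{full} Jacobian: the other summands carry different matrix products $\mathbf W_S$ and could in principle cancel against the distinguished one. The first thing I would try is to avoid splitting at all. I would prove by induction the stronger statement that, for every $i,s\in\mathcal C_r$ and every $p$, $\mathbf J_{is}(p)$ dominates, in the sense of a test direction, a fixed matrix $\mathbf K(p)=\kappa_p\,(\mathbf W(p-1)^\top\cdots\mathbf W(0)^\top)$ with $\kappa_p=\prod_{q<p}\varepsilon\alpha^{(q)}_{\theta,r}/n_r$. Concretely, I would take $\mathbf u$ to be a top right-singular vector of $\mathbf K(\ell)$ and try to show $\langle \mathbf J_{is}(\ell)\mathbf u,\mathbf K(\ell)\mathbf u\rangle\ge\|\mathbf K(\ell)\mathbf u\|^2$. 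The argument would exploit two facts: the component-uniform block $\frac{1}{n_r}\sum_{j\in\mathcal C_r}\mathbf J_{js}$ is shared by all $i$, and the global term feeds every node the same average.

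If cancellation genuinely cannot be excluded this way, I expect the proof to fall back on the ``existence direction'' wording. That reading would show that the influence admitted by the dynamics contains a distance-independent component of the stated size, i.e.\ the Jacobian is bounded below once the non-global channels are switched off (e.g.\ the residual and $\widehat{\mathbf A}$ branches). In that case I would state this precisely as the conditions under which Eq.~\eqref{eq:influence_x_to_h_exist_prod} holds with equality-order tightness.
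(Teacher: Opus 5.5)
Your argument for the distinguished summand $S=\{0,\dots,\ell-1\}$ is the paper's entire proof. But the cancellation obstacle you flag is fatal: for the residual update in Eq.~\eqref{eq:s3_initial} the inequality is false, so your test-direction induction cannot succeed. Take $\ell=1$ and $i=s$, and set $m:=\widehat{\mathbf A}_{ii}+\alpha^{(0)}_{\theta,r}/n_r>0$ and $\mathbf W(0)=-(\varepsilon m)^{-1}\mathbf I_d$. Then $\partial\mathbf h_i(1)/\partial\mathbf x_i=\mathbf I_d-\mathbf I_d=\mathbf 0$, while the right-hand side equals $\alpha^{(0)}_{\theta,r}/(n_r m)>0$. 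For $i\neq s$, take $\ell=2$, $\mathbf W(0)=\mathbf I_d$ and $\mathbf W(1)=w\,\mathbf I_d$. The Jacobian is $\varepsilon\,[\mathbf M^{(0)}_{is}+w\,\mathbf M^{(1)}_{is}+\varepsilon w\,(\mathbf M^{(1)}\mathbf M^{(0)})_{is}]\,\mathbf I_d$, which vanishes for $w=-\mathbf M^{(0)}_{is}/(\mathbf M^{(1)}_{is}+\varepsilon(\mathbf M^{(1)}\mathbf M^{(0)})_{is})\neq 0$. Both constructions have $\sigma_{\min}>0$ and work for every $\varepsilon>0$, so non-degeneracy of the $\mathbf W(p)$ alone cannot exclude cross-subset cancellation. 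Excluding it would need an extra sign condition. For example, if the $\mathbf W(p)$ are commuting and symmetric positive definite, every summand is positive semidefinite, and the full-depth one then lower-bounds the norm.

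The paper avoids the issue by writing the layer as $\mathbf H(\ell+1)=\mathbf M(\ell)\mathbf H(\ell)\mathbf W(\ell)$ with $\mathbf M(\ell)=\widehat{\mathbf A}+\mathbf P_\theta(\ell)$, then multiplying by $\varepsilon^\ell$ at the end. In effect it drops the identity branch without comment and proves the bound for $\mathbf H(\ell+1)=\varepsilon\,\mathbf M(\ell)\mathbf H(\ell)\mathbf W(\ell)$. In that setting your expansion has only the term $S=\{0,\dots,\ell-1\}$. The Jacobian is then exactly $\varepsilon^\ell(\mathbf M(\ell-1)\cdots\mathbf M(0))_{is}$ times one fixed product of the $\mathbf W(p)$, and your nonnegativity, idempotence and $\sigma_{\min}$ steps finish the proof as the paper does. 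So your fallback is the provable version and matches the paper. However, you only need to switch off the residual branch. $\widehat{\mathbf A}$ can stay: without the identity, every path carries the same $\mathbf W$-product, and $\widehat{\mathbf A}$ only adds nonnegative weight to the scalar coefficient. One minor slip: $(\mathbf P^{(\ell-1)}\cdots\mathbf P^{(0)})_{is}=n_r^{-1}\prod_p\alpha^{(p)}_{\theta,r}$. Your middle expression should keep the factor $n_r^{\ell-1}$ without the compensating $n_r^{-(\ell-1)}$, so the stated bound is loose by $n_r^{\ell-1}$. The paper's corresponding ``equality'' contains the same harmless error.
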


\begin{figure*}
    \centering
    \includegraphics[width=1\linewidth]{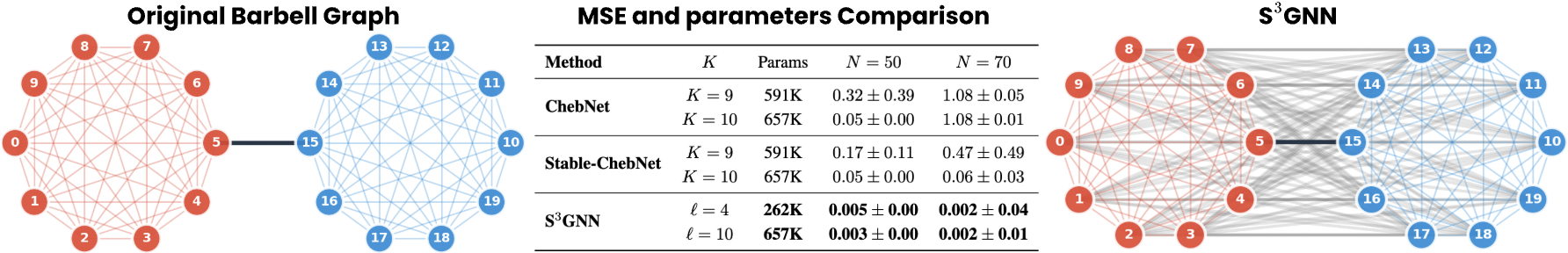}
    \caption{Illustration on S$^3$GNN dynamic, i.e., neighorhood aggregation and global mixing in Barbell graph, and comparison between S$^3$GNN with spectral methods, e.g., ChebNet and Stable-ChebNet in terms of learning accuracy and number of parameters. Our S$^3$GNN show order-of-magnitude better performances with less parameters.}
    \label{fig:barbell}
\end{figure*}

Proposition \ref{prop:influence_x_to_h_exist_prod} further indicates that to engage long-range communication between nodes when it is highly needed, one may expect to have $\alpha_{\theta,r} >0$ and larger when $\ell$ goes higher. We frequently observe this phenomenon in our numerical experiments, e.g., see more results in  Figure.~\ref{fig:alpha_dynamics}.

\subsection{Jacobian Stability and Flexible Choice of Feature Transformation}\label{sec:jacobian_stability}
With the increase of layers, the dynamics can become increasingly unstable, without properly constraining the weights. 
Specifically, the dynamics in Eq.~\eqref{eq:s3_initial} can suffer from the unstable Jacobian energy in each layer \citep{hariri2025return}. To see this, one can let $\mathbf J(\ell) = \frac{\partial \mathrm{vec}({\mathbf H(\ell+1))}}{\partial \mathrm{vec}({\mathbf H(\ell)})} \in \mathbb R^{Nd\times Nd}$ (assume $\mathcal G$ is connected for simplicity), so that the Jacobian energy of each $\ell$ can be denoted as $\|\mathbf J(\ell)\|_2$. Motivated by the Stable-ChebNet \citep{hariri2025return}, we incorporate antisymmetric constraint on $\mathbf W$, resulting in
\begin{align}\label{eq:s3_final}
\mathbf H{(\ell+1)}
=
\mathbf H{(\ell)}
+
\epsilon\Big(
\mathbf P_\theta{(\ell)}\mathbf H{(\ell)}\widehat{\mathbf W}{(\ell)}
+\widehat{\mathbf A}\mathbf H{(\ell)}\widehat{{\mathbf W}}{(\ell)}
\Big), 
\end{align}
where $\widehat{\mathbf W}$ satisfies the antisymmetric property as Eq.~\eqref{eq:stable_chebnet}. In practice, one can deploy two antisymmetric matrices, with each corresponding to the global-mixing and spatial part. As our model was originally motivated by S$^2$GNN \citep{geisler2024spatio}, but with simpler and stability guarantee (See proposition \ref{prop:stability} below), we therefore name our model as \textbf{S$^3$GNN}, and we show the stability guarantee as follows. 
\begin{prop}[Stability]\label{prop:stability}
The Jacobian energy of S$^3$GNN in Eq.~\eqref{eq:s3_final} is stable, i.e., no exponential growth or decay in each layer. That is $\|\mathbf J(\ell)\|_2 = 1 + \mathcal O(\epsilon^2)$.
\end{prop}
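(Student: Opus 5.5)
The plan is to write the layer Jacobian explicitly and read off its spectral structure. The update in Eq.~\eqref{eq:s3_final} is linear in $\mathbf H(\ell)$, so vectorizing with the identity $\mathrm{vec}(\mathbf M\mathbf H\mathbf W)=(\mathbf W^\top\otimes\mathbf M)\mathrm{vec}(\mathbf H)$ gives
\begin{align*}
\mathbf J(\ell)=\mathbf I_{Nd}+\epsilon\,\mathbf K(\ell),\qquad \mathbf K(\ell)=\widehat{\mathbf W}(\ell)^\top\otimes\mathbf S(\ell),\qquad \mathbf S(\ell)=\mathbf P_\theta(\ell)+\widehat{\mathbf A}.
\end{align*}
If two distinct antisymmetric matrices are used for the global and spatial parts, $\mathbf K(\ell)$ is instead a sum of two Kronecker terms, $\widehat{\mathbf W}_g^\top\otimes\mathbf P_\theta+\widehat{\mathbf W}_s^\top\otimes\widehat{\mathbf A}$. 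The argument below is arranged so that it covers both cases.

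The key structural observation is that $\mathbf K(\ell)$ is skew-symmetric.
\begin{itemize}
\item $\mathbf P_\theta=\sum_r\alpha_{\theta,r}\mathbf v^{(r)}\mathbf v^{(r)\top}$ is symmetric.
\item $\widehat{\mathbf A}$ is symmetric because the graph is undirected.
\item $\widehat{\mathbf W}^\top=-\widehat{\mathbf W}$.
\end{itemize}
Since $(\mathbf B\otimes\mathbf C)^\top=\mathbf B^\top\otimes\mathbf C^\top$, each Kronecker term satisfies $(\widehat{\mathbf W}^\top\otimes\mathbf M)^\top=\widehat{\mathbf W}\otimes\mathbf M=-(\widehat{\mathbf W}^\top\otimes\mathbf M)$ for symmetric $\mathbf M$. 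A sum of such terms is also skew-symmetric, so $\mathbf K^\top=-\mathbf K$.

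Next I compute the spectral norm of $\mathbf J(\ell)$ directly from $\mathbf J^\top\mathbf J$:
\begin{align*}
\mathbf J^\top\mathbf J=(\mathbf I-\epsilon\mathbf K)(\mathbf I+\epsilon\mathbf K)=\mathbf I-\epsilon^2\mathbf K^2=\mathbf I+\epsilon^2\mathbf K^\top\mathbf K.
\end{align*}
The first-order terms cancel exactly because $\mathbf K$ is skew. This gives
\begin{align*}
\|\mathbf J(\ell)\|_2=\sqrt{1+\epsilon^2\|\mathbf K(\ell)\|_2^2}=1+\tfrac{\epsilon^2}{2}\|\mathbf K(\ell)\|_2^2+\mathcal O(\epsilon^4)=1+\mathcal O(\epsilon^2).
\end{align*}
The same identity gives $\sigma_{\min}(\mathbf J)\ge 1$, so the Jacobian energy can neither decay nor grow exponentially at first order.

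The remaining step is to make the $\mathcal O(\epsilon^2)$ constant explicit and uniform, so that it is a genuine $\mathcal O(\epsilon^2)$ rather than an implicitly parameter-dependent one. For this I use $\|\mathbf B\otimes\mathbf C\|_2=\|\mathbf B\|_2\|\mathbf C\|_2$ together with two bounds.
\begin{itemize}
\item $\|\widehat{\mathbf A}\|_2\le 1$, since the eigenvalues of $\widehat{\mathbf A}$ lie in $[-1,1]$.
\item $\|\mathbf P_\theta\|_2=\max_r|\alpha_{\theta,r}|$, since the $\mathbf v^{(r)}$ are orthonormal.
\end{itemize}
Together these give $\|\mathbf K\|_2\le\|\widehat{\mathbf W}\|_2(\max_r|\alpha_{\theta,r}|+1)$, with the obvious two-matrix analogue, which is bounded for bounded parameters.

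I expect the main obstacle to be the possible presence of a nonlinearity $\sigma$ in the actual implementation. In that case $\mathbf K$ acquires a diagonal factor $\mathrm{diag}(\sigma')$ and is no longer exactly skew. I would handle this as Stable-ChebNet does: either state the result for the linear update as written in Eq.~\eqref{eq:s3_final}, or argue that the eigenvalues of $\mathbf D\mathbf K$, with $\mathbf D$ diagonal and positive, remain purely imaginary. The latter gives an eigenvalue-based rather than norm-based version of the claim.
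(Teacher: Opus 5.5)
Your proposal is correct and follows essentially the same route as the paper. Both vectorize to get $\mathbf J(\ell)=\mathbf I_{Nd}+\epsilon\,\widehat{\mathbf W}(\ell)^\top\otimes(\mathbf P_\theta(\ell)+\widehat{\mathbf A})$, use that an antisymmetric matrix Kronecker a symmetric one is antisymmetric so the first-order term in $\mathbf J^\top\mathbf J$ cancels, and conclude $\|\mathbf J(\ell)\|_2=\sqrt{1+\epsilon^2\lambda_{\max}(\mathbf K^\top\mathbf K)}=1+\mathcal O(\epsilon^2)$; your extras (the two-matrix variant, the uniform bound $\|\mathbf K\|_2\le\|\widehat{\mathbf W}\|_2(\max_r|\alpha_{\theta,r}|+1)$, and $\sigma_{\min}(\mathbf J)\ge 1$) are valid refinements the paper does not spell out.
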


Figure~\ref{fig:barbell} visualizes the propagation of our S$^3$GNN on the Barbell graph, a synthetic node-level regression task for testing OSQ robustness  \citep{new_bamberger2025bundle,hariri2025return}. In addition, we show comparisons in MSE loss and number of parameters between our model and Stable-ChebNet. Our key observations are  
\begin{enumerate}
    \item S$^3$GNN delivers \textbf{orders-of-magnitude} better MSE with much \textbf{fewer parameters}. For example, our model is around 30 times better in the case when $N =50$ compared to Stable-ChebNet, i.e., 0.17 V.S 0.005, whilst halving the number of parameters, i.e., 262K V.S 591K.

    \item  Our model achieves this remarkable result with only 4 layers compared to ChebNet and Stable-ChebNet, in which the polynomial order is up to 10 with 4 layers, suggesting a \textbf{more effective information transaction}.

    \item When parameter count is matched (i.e., 657K), S$^3$GNN maintains leading performances across domains, supporting the effectiveness of incorporating antisymmetric feature transformation in our model.
\end{enumerate}

Lastly, we also present the Jacobian norm distribution of S$^3$GNN results (MAE = 0.2429$\pm$0.0014) in Figure~\ref{fig:osq_bound_compare}, and one can observe that all values in \sgreen{\textbf{green}} are beyond the theoretical lower bound. This directly verifies that the elements that S$^3$GNN leveraged additional to the diagonal spectral filtering dynamic in Eq.~\eqref{eq:s2gnn} indeed improve the long-range communication between distant nodes from both theory (i.e., Proposition \ref{prop:influence_x_to_h_exist_prod}) and practical sides.


\paragraph{Flexible Choice of Feature Transform}
In our proposed S$^3$GNN, we constrain $\widehat{\mathbf W}$ to be antisymmetric to stabilise the Jacobian energy, $\|\mathbf J(\ell)\|_2$. However, the choice of feature transformation can be varied. For example, based on Proposition \ref{prop:influence_x_to_h_exist_prod}, one may let $\mathbf W$ be orthogonal matrix, so that $\sigma_{\min} = 1$ to eliminate the effect of $\mathbf W$ in Eq.~\eqref{eq:influence_x_to_h_exist_prod}. From a geometric viewpoint, such an orthogonality requirement can be understood as a manifold constraint that restricts $\mathbf W$ to lie on the orthogonal group, thereby enforcing norm-preserving feature transport across layers. In addition, inspired by manifold-constrained mixing mechanisms in deep residual architectures \citep{xie2025mhc,mishra2026mhc,shi2021coupling}, one may also consider using doubly stochastic transforms to mix feature channels while preserving simple aggregate quantities (e.g., keeping the per-layer feature mean unchanged). We explore this further in Section~\ref{sec:ablation_future}. 

\begin{figure*}[t]
    \centering
    \resizebox{\linewidth}{!}{
   \includegraphics[
    width=\linewidth,
    height=0.15\textheight,
    keepaspectratio=false
]{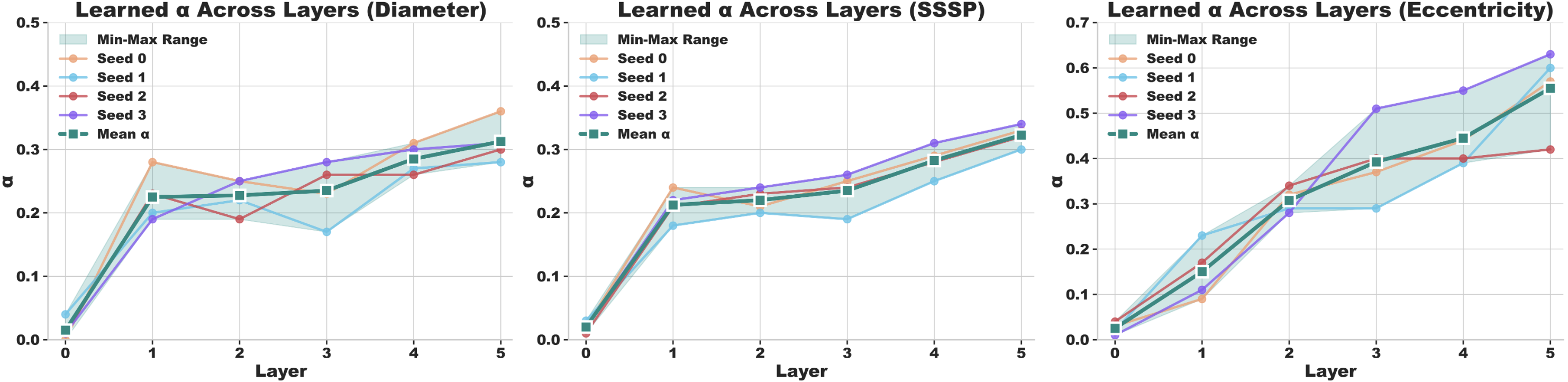}}
    \caption{Changes of learned $\alpha$ values across the layers.}
    \label{fig:alpha_dynamics}
\end{figure*}

\vspace{-6pt}
\section{Experiments}
We evaluate our S$^3$GNN in multiple tasks, such as graph-level property prediction, large-scale node classification (both in Section~\ref{sec:graph_property}), long-range graph benchmarks (Section~\ref{sec:lrgb}), multi-hop/entity KGQA (Section~\ref{sec:kgqa}), topological interpolation (Section~\ref{sec:tsbm}), and mesh-based fluid dynamics prediction (Section~\ref{sec:mesh_based}). In addition, we also conduct ablation studies and provide a discussion on future research directions in Section~\ref{sec:ablation_future}.
Additional experiments (e.g., on heterophily graphs) and comparisons are included in the Appendix.
All experiments are executed on an NVIDIA\textsuperscript{\textregistered} H200 SXM GPU (141\,GB HBM3e) within an HPC cluster.

\subsection{Graph Property Prediction Dataset}\label{sec:graph_property}
We first evaluate the performance of S$^3$GNN via the graph property prediction dataset provided by \citep{corso2020principal} followed by the experiment settings in \citep{gravina2022anti,hariri2025return}. The dataset consists of undirected graphs sampled from a diverse collection of random and structured graph families (e.g., Erdős–Rényi, Barabási–Albert, and caterpillar graphs), thereby covering a wide range of topological properties. Followed by \citep{gravina2022anti}, the node number in each graph is between 25 and 35 to increase the task complexity and raising the need for long-range information propagation. 
Specifically, \textit{Diameter} is a graph-level regression task, whereas \textit{Single Source Shortest Path (SSSP)} and \textit{Eccentricity} are node-level regression tasks, respectively.  In addition, as S$^3$GNN provides a scalar-reweighted fully-connected subgraph for each connected component, we also compare S$^3$GNN with graph transformer-based models, e.g., DIFFormer \citep{wu2023difformer}, in which the feature dynamic is conducted via fully-connected graphs. Finally, we also compare our model with S$^2$GNN as shown in Eq.~\eqref{eq:s2gnn}. 

\begin{table}[t]
\centering
\caption{Mean test set $\log_{10}(\mathrm{MSE})$ and standard deviation on the Graph Property Prediction dataset. The lower the better.}
\label{tab:gpp_logmse}
\resizebox{\linewidth}{!}{
\begin{tabular}{lccc}
\rowcolor[HTML]{EFEFEF}
\toprule
\textbf{Model} & \textbf{Diameter} & \textbf{SSSP} & \textbf{Eccentricity} \\
\midrule
GCN       & 0.7424 $\pm$ 0.0466 & 0.9499 $\pm$ 0.0001 & 0.8468 $\pm$ 0.0028 \\
GAT       & 0.8221 $\pm$ 0.0752 & 0.6951 $\pm$ 0.1499 & 0.7909 $\pm$ 0.0222 \\
GraphSAGE & 0.8645 $\pm$ 0.0401 & 0.2863 $\pm$ 0.1843 & 0.7863 $\pm$ 0.0207 \\
GIN       & 0.6131 $\pm$ 0.0990 & $-$0.5408 $\pm$ 0.4193 & 0.9504 $\pm$ 0.0007 \\
DGC       & 0.6028 $\pm$ 0.0050 & $-$0.1483 $\pm$ 0.0231 & 0.8261 $\pm$ 0.0032 \\
GRAND     & 0.6715 $\pm$ 0.0490 & $-$0.0942 $\pm$ 0.3897 & 0.6602 $\pm$ 0.1393 \\
A-DGN 
          & 0.2271 $\pm$ 0.0804 & $-$1.8288 $\pm$ 0.0607 & 0.7177 $\pm$ 0.0345 \\
ChebNet & $-$0.1517 $\pm$ 0.0343 & $-$1.8519 $\pm$ 0.0539 & $-$1.2151 $\pm$ 0.0852 \\
Stable-ChebNet
        & $-$0.2477 $\pm$ 0.0526 & $-$2.2111 $\pm$ 0.0160 & $-$2.1043 $\pm$ 0.0766 \\
S$^2$GNN
        & $-$0.1824 $\pm$ 0.0329 & $-$2.103 $\pm$ 0.0386  & $-$1.4622 $\pm$ 0.0815 \\        
DIFFormer
        & $-$0.2046 $\pm$ 0.0358 & $-$1.9033 $\pm$ 0.0839 & $-$1.4010 $\pm$ 0.0789 \\
        
\midrule
\textbf{S$^3$GNN (ours)}
        & \textbf{$-$0.3301 $\pm$ 0.0342} & \textbf{$-$2.6311 $\pm$ 0.0171} & \textbf{$-$2.4801 $\pm$ 0.0239}\\
\bottomrule
\end{tabular}}
\end{table}

\vspace{-6pt}
\paragraph{Results and Computational Complexity}

The results are contained in Table~\ref{tab:gpp_logmse}. One can check that S$^3$GNN consistently outperforms all baselines. Especially in the most challenging task, e.g., \textit{Eccentricity }, S$^3$GNN GNN yields an\textit{order-of-magnitude} improvement compared to S$^2$GNN, while consistently improving performance on \textit{Diameter} and \textit{SSSP}. In addition, unlike spectral methods like ChebNet and Stable-ChebNet, which require 4 to 6 layers and a 4th order polynomial to invoke the best results, our model achieves the best results with only 4 layers with cheaper computational cost. For example, assume $\mathcal G$ is connected, the global mixing term in S$^3$GNN reduces to a mean aggregation and can be computed in $\mathcal O(Nd)$, leading to a per-layer complexity $\mathcal O(|\mathcal E|d+Nd^2)$, whereas the complexity for both ChebNet and Stable-ChebNet is $\mathcal O(K|\mathcal E|d+Nd^2)$, roughly a factor-$K$ higher propagation cost per layer. Finally, as aforementioned, the propagation of S$^2$GNN requires eigendecomposition, which incurs an additional computational cost (e.g., $\mathcal O(N^3)$ in the worst case) and makes it less scalable to large graphs. 

\paragraph{$\alpha$ Dynamics}
Recall that one assumption in our theory is the mixing coefficient $\alpha^{(\ell)}_\theta > 0, \,\, \forall \ell$, and one expects $\alpha^{(\ell)}_\theta$ to become larger when $\ell$ is larger, where long-range communication between nodes is highly required. Interestingly, we empirically observed that the non-negative assumption on $\alpha$ can be \textbf{automatically satisfied}, evidenced by Figure~\ref{fig:osq_bound_compare} where one learnable $\alpha$ leads to a positive theoretical lower bound and Figure~\ref{fig:alpha_dynamics} in which an increasing trend between $\alpha_\theta$ and $\ell$ is consistently observed for all datasets. These findings indicate that the positivity and growth of $\alpha_\theta^{(\ell)}$ emerge naturally during training, rather than being enforced explicitly, thereby aligning the learned dynamics with the conditions required by our theoretical analysis.


\vspace{-15pt}

\begin{table}[t]
\centering
\scriptsize 
\setlength{\tabcolsep}{2pt}      
\renewcommand{\arraystretch}{1.0}
\caption{Node classification accuracy on OGB datasets.}\label{tab:ogbn}
\begin{subtable}[t]{0.49\columnwidth}
\centering
\caption{Accuracy on ogbn-arxiv.}
\begin{tabular}{@{}l r@{}}
\rowcolor[HTML]{EFEFEF}
\toprule
\textbf{Model} & \textbf{ogbn-arxiv} \\
\midrule
GCN            & $71.74 \pm 0.29$ \\
ChebNet        & $73.27 \pm 0.23$ \\
ChebNetII      & $72.32 \pm 0.23$ \\
GraphSAGE      & $71.49 \pm 0.27$ \\
GAT            & $72.01 \pm 0.20$ \\
NodeFormer     & $59.90 \pm 0.42$ \\
GraphGPS       & $70.92 \pm 0.04$ \\
GOAT           & $72.41 \pm 0.40$ \\
EXPHORMER+GCN  & $72.44 \pm 0.28$ \\
SPEXPHORMER    & $70.82 \pm 0.24$ \\
DIFFormer      & $72.21 \pm 0.33$ \\
Stable-ChebNet & $75.42 \pm 0.28$ \\
S$^2$GCN       & $73.89 \pm 0.58$ \\ 
\midrule
\textbf{S$^3$GNN (ours)} & $\mathbf{76.38} \pm \mathbf{0.32}$ \\
\bottomrule
\end{tabular}
\end{subtable}
\hfill
\begin{subtable}[t]{0.49\columnwidth}
\centering
\caption{Accuracy on ogbn-proteins.}
\begin{tabular}{@{}l r@{}}
\rowcolor[HTML]{EFEFEF}
\toprule
\textbf{Model} & \textbf{ogbn-proteins} \\
\midrule
MLP           & $72.04 \pm 0.48$ \\
GCN           & $72.51 \pm 0.35$ \\
ChebNet       & $77.55 \pm 0.43$ \\
SGC           & $70.31 \pm 0.23$ \\
GCN-NSAMPLER  & $73.51 \pm 1.31$ \\
GAT-NSAMPLER  & $74.63 \pm 1.24$ \\
SIGN          & $71.24 \pm 0.46$ \\
NodeFormer    & $77.45 \pm 1.15$ \\
SGFormer      & $79.53 \pm 0.38$ \\
SPEXPHORMER   & $80.65 \pm 0.07$ \\
DIFFormer     & $77.69 \pm 0.25$ \\
Stable-ChebNet& $79.89 \pm 0.18$ \\
S$^2$GCN      & $80.58 \pm 0.72$ \\ 
\midrule
\textbf{S$^3$GNN (ours)} & {$\mathbf{81.90} \pm \mathbf {0.12}$} \\
\bottomrule
\end{tabular}
\end{subtable}

\end{table}


\paragraph{Scalability over Large-Scale Datasets}
In addition to the graph-level tasks in graph property prediction datasets, we further evaluate the performance of S$^3$GNN in node classification tasks via real-world large-scale datasets, e.g., \textit{ogbn-arxiv and ogbn-proteins}. The results can be found in Table~\ref{tab:ogbn}, where S$^3$GNN shows outstanding results compared to multiple baselines. In particular, compared to those graph transformer-based methods, such as SGFormer \citep{wu2023sgformer}, Spexphormer \citep{shirzad2024even}, and DIFFormer \citep{wu2023difformer}. This indicates that to account for the long-range interaction between distant nodes, one may simply need to add a scaler-adjusted low-rank global-mixing process as in S$^3$GNN, rather than conducting a complex attention mechanism over any pair of nodes.  

\begin{table}[t]
\centering
\caption{Long-range benchmark results. 
}
\label{tab:long_range_peptides}
\resizebox{\linewidth}{!}{
\begin{tabular}{l l c c}
\rowcolor[HTML]{EFEFEF}
\toprule
\textbf{Model Type} & \textbf{Model} & \textbf{peptides-func (AP$\uparrow$)} & \textbf{peptides-struct (MAE$\downarrow$)} \\
\midrule
\multirow{7}{*}{\textit{Transformer}} 
& SAN+LapPE    & $63.84 \pm 1.21$ & $0.2683 \pm 0.0043$ \\
& TIGT         & $66.79 \pm 0.74$ & $0.2485 \pm 0.0015$ \\
& Specformer   & $66.86 \pm 0.64$ & $0.2550 \pm 0.0014$ \\
& Exphormer    & $65.27 \pm 0.43$ & $0.2481 \pm 0.0007$ \\
& G.MLPMixer   & $69.21 \pm 0.54$ & $0.2475 \pm 0.0015$ \\
& Graph ViT    & $69.42 \pm 0.75$ & $0.2449 \pm 0.0016$ \\
& GRIT         & $69.88 \pm 0.82$ & $0.2460 \pm 0.0012$ \\
\midrule
\multirow{3}{*}{\textit{Rewiring}}
& LASER        & $64.40 \pm 0.10$ & $0.3043 \pm 0.0019$ \\
& DRew-GCN     & $69.96 \pm 0.76$ & $0.2781 \pm 0.0028$ \\
& +PE          & $71.50 \pm 0.44$ & $0.2536 \pm 0.0015$ \\
\midrule
\multirow{3}{*}{\textit{State Space}}
& Graph Mamba  & $67.39 \pm 0.87$ & $0.2478 \pm 0.0016$ \\
& GMN          & $70.71 \pm 0.83$ & $0.2473 \pm 0.0025$ \\
& MP-SSM       & $69.93 \pm 0.52$ & $0.2458 \pm 0.0017$ \\
\midrule
\multirow{5}{*}{{\textit{VN}}}
& GCN+VN       & $65.40 \pm 0.66$ & $0.2509 \pm 0.0030$ \\
& GatedGCN+VN\textsuperscript{$\dagger$}  
    & $68.80 \pm 0.47$ & $0.2460 \pm 0.0021$ \\
& GatedGCN+VN\textsuperscript{$\ddagger$} 
    & $68.10 \pm 0.33$ & $0.2498 \pm 0.0042$ \\
& GCN+FN       & $67.80 \pm 0.54$ & $0.2456 \pm 0.0019$ \\
& GatedGCN+FN  & $69.50 \pm 0.47$ & $0.2470 \pm 0.0039$ \\
\midrule
\multirow{10}{*}{\textit{GNN}}
& A-DGN        & $59.75 \pm 0.44$ & $0.2874 \pm 0.0021$ \\
& ChebNet      & $69.61 \pm 0.33$ & $0.2627 \pm 0.0033$ \\
& GCN          & $68.60 \pm 0.50$ & $0.2460 \pm 0.0007$ \\
& GRAND        & $57.89 \pm 0.62$ & $0.3418 \pm 0.0015$ \\
& GraphCON     & $60.22 \pm 0.68$ & $0.2778 \pm 0.0018$ \\
& SWAN         & $67.51 \pm 0.39$ & $0.2485 \pm 0.0009$ \\
& S$^2$GNN        & $72.75 \pm 0.66$ & $0.2487 \pm 0.0019$ \\
& +PE          & $73.11 \pm 0.66$ & $0.2447 \pm 0.0032$ \\
& Stable-ChebNet & $70.32 \pm 0.26$ & $0.2542 \pm 0.0030$ \\
& \textbf{S$^3$GNN} &\textbf{73.20} $\pm$ \textbf{0.14}  & \textbf{0.2429} $\pm$ \textbf{0.0014} \\
\bottomrule
\end{tabular}}
\end{table}

\subsection{Long-Range Graph Benchmarks}\label{sec:lrgb}
Long-range graph benchmarks (LRGB) 
\citep{dwivedi2022long} are designed to specifically test GNN performance on tasks where distant node information is important. We select \textit{peptides-func} and \textit{peptides-stuct}, with the former as a graph classification evaluated by average precision (AP), and the latter as graph regression evaluated by mean absolute error (MAE). The results are presented in Table~\ref{tab:long_range_peptides}, where S$^3$GNN shows leading performance compared to several baseline architectures. Interestingly, we did not observe strong performance for those spatial rewiring methods, such as LASER \citep{new_barbero2023locality} and DRew \citep{gutteridge2023drew}. Instead, GNNs that propagate feature information in a global manner, such as S$^2$GNN and Stable-ChebNet, consistently outperform other baselines. However, compared to Stable-ChebNet, the learning accuracy gain of S$^2$GNN may be due to the additional spatial part and the positional encoding (PE), which requires high-cost eigendecomposition of every input graph. On the other hand, our S$^3$GNN outperforms Stable-ChebNet without conducting eigendecomposition and PE. This suggests that explicitly enhancing global information exchange is a more effective strategy for these long-range graph tasks, and S$^3$GNN achieves this advantage with a \textit{lightweight design and low computational overhead}. 


\subsection{Multi-Hop/Entity KGQA}\label{sec:kgqa}

Knowledge Graph Question Answering (KGQA) aims to answer natural language questions by reasoning over structured knowledge graphs composed of entities and relations \citep{pan2024unifying}. While recent LLM have shown strong performance on simple factual queries, they often struggle when questions require reasoning over multiple hops or involve multiple query entities, due to the combinatorial expansion of the graph context and the difficulty of reliably identifying relevant relational paths. As a result, multi-hop and multi-entity questions are regarded as among the most challenging settings in KGQA \citep{mavromatis2022rearev}.

Following recent work on deploying GNNs for effective graph retrieval \citep{mavromatis2024gnn}, we study KGQA under a retrieval-augmented (RAG) setting, where the model (e.g., Reasoning GNN \citep{mavromatis2022rearev}) first retrieves a compact subgraph relevant to the question and then performs reasoning on the retrieved evidence. In this experiment, GNN-based retrieval is particularly focused on multi-hop and multi-entity queries (hops/entities $\geq2$), and we replace the original message-passing dynamics of the reasoning GNN with S$^3$GNN, while leaving the remaining RAG pipeline unchanged. We evaluate on the multi-hop and multi-entity subsets of \textit{WebQSP} and \textit{CWQ}, and the multi-hop split of \textit{MetaQA-3}, and report the corresponding F1/H@1 metrics. Figure~\ref{fig:kgqa} illustrates the multi-hop/entity KGQA process and the role of S$^3$GNN.

\begin{figure}[t]
    \centering
    \includegraphics[width=1\linewidth]{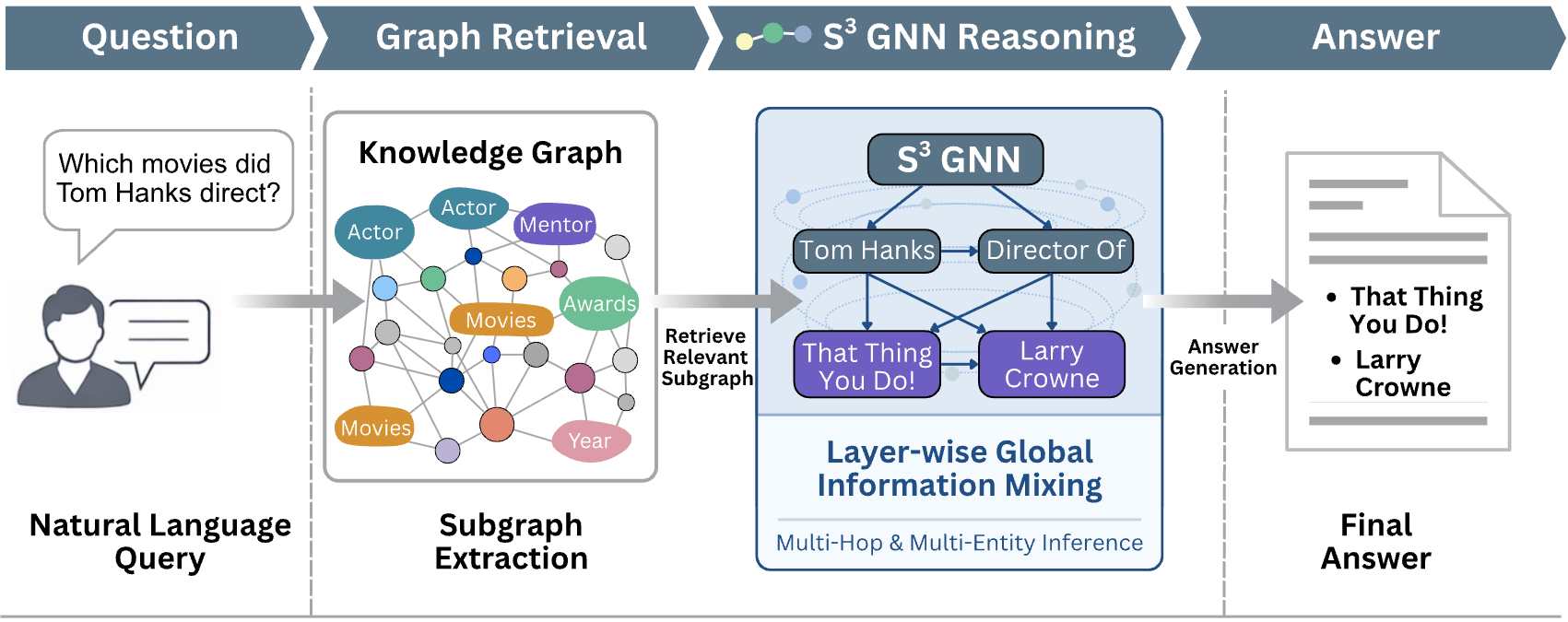}
    \caption{Illustration of multi-hop/entity KGQA with S$^3$GNN.}
    \label{fig:kgqa}
\end{figure}


\begin{table}[t]
\centering
\caption{Performance on multi-hop and multi-entity questions.}
\label{tab:multihop_multientity}

\setlength{\tabcolsep}{7pt}
\renewcommand{\arraystretch}{1.5}
\begin{adjustbox}{max width=\linewidth}
\Large 
\begin{tabular}{l|cc|cc|c}
\toprule
\multirow{2}{*}{\textbf{Method}}
& \multicolumn{2}{c|}{\textbf{WebQSP (F1)}}
& \multicolumn{2}{c|}{\textbf{CWQ (F1)}}
& \textbf{MetaQA-3 (H@1)} \\
& multi-hop & multi-entity & multi-hop & multi-entity & multi-hop \\
\midrule
LLM (No RAG) & 48.4 & 61.5 & 33.7 & 32.3 & 29.7 \\
Reasoning GNN         & 58.8 & 70.4 & 57.7 & 54.2 & 98.6 \\
Reasoning S$^3$GNN          & \textbf{69.6} & \textbf{73.0} &\textbf{68.8} & \textbf{61.1}  &\textbf{98.8}  \\
\textbf{Improvement}  & \textcolor{cyan!60!black}{\textbf{+10.8\%}} & \textcolor{cyan!60!black}{\textbf{+2.6\%}} &\textcolor{cyan!60!black}{\textbf{+11.1\%}} & \textcolor{cyan!60!black}{\textbf{+6.9\%}}  &\textcolor{cyan!60!black}{\textbf{+0.2\%}}\\
\midrule
RoG          & 63.3 & 65.1 & 59.3 & 58.3 & 84.8 \\
SubgraphRAG  & 65.8 & 54.9 & 55.8 & 52.3 & -- \\
\midrule
Reasoning GNN-RAG      & 69.8 & 82.3 & 68.2 & 64.8 & 98.6 \\
Reasoning S$^3$GNN-RAG      &\textbf{70.3} &\textbf{85.5}  &\textbf{68.8}  & \textbf{64.9}  & \textbf{98.9} \\

\textbf{Improvement} 
& \textcolor{cyan!60!black}{\textbf{+0.5\%}}
& \textcolor{cyan!60!black}{\textbf{+3.2\%}}
& \textcolor{cyan!60!black}{\textbf{+0.6\%}}
& \textcolor{cyan!60!black}{\textbf{+0.1\%}}
& \textcolor{cyan!60!black}{\textbf{+0.3\%}} \\

\bottomrule
\end{tabular}
\end{adjustbox}
\end{table}



\paragraph{Results}
From the results in Table~\ref{tab:multihop_multientity}, S$^3$GNN consistently improves performance in both the presence and absence of RAG. We further observe that, in some cases, the gains obtained by adopting S$^3$GNN are sufficient to surpass the performance of a reasoning GNN equipped with RAG (e.g., 68.8 V.S 68.2 in \textit{CWQ} multi-hop F1 score). This suggests that a more suitable GNN dynamics in the reasoning module may partially substitute explicit retrieval mechanisms and lead to stronger QA performance.



\begin{table}[t]
\centering
\caption{Overall 1 (left) and square root (right) 2-Wasserstein distances on brain signals, lower is the better.}
\label{tab:tsb_brain_wasserstein}
\setlength{\tabcolsep}{10pt}
\renewcommand{\arraystretch}{1.25}
\begin{adjustbox}{max width=\linewidth}
\begin{tabular}{lcc}
\rowcolor[HTML]{EFEFEF}
\toprule
\textbf{Method} & \multicolumn{2}{c}{\textbf{Brain signals}} \\
\midrule
TSB-BM-GCN & $7.51 \pm 0.08$ & $5.51  \pm 0.06 $ \\
TSB-VE-GCN & $7.59 \pm 0.05 $ & $5.55 \pm 0.04$ \\
TSB-VP-GCN & $7.67 \pm 0.11$ & $5.64 \pm 0.09$ \\
\midrule
TSB-BM-S$^3$GNN &$\mathbf{7.25} \pm \mathbf{0.05}$  & $\mathbf{5.29} \pm \mathbf{0.18}$  \\
TSB-VE-S$^3$GNN &$\mathbf{7.22} \pm \mathbf{0.03}$  &$\mathbf{5.24} \pm \mathbf{0.10}$  \\
TSB-VP-S$^3$GNN & $\mathbf{7.29} \pm \mathbf{0.05}$  & $\mathbf {5.30} \pm \mathbf{0.05}$  \\
\bottomrule
\end{tabular}
\end{adjustbox}
\end{table}

\begin{figure}[t]
    \centering
    \includegraphics[width=1\linewidth]{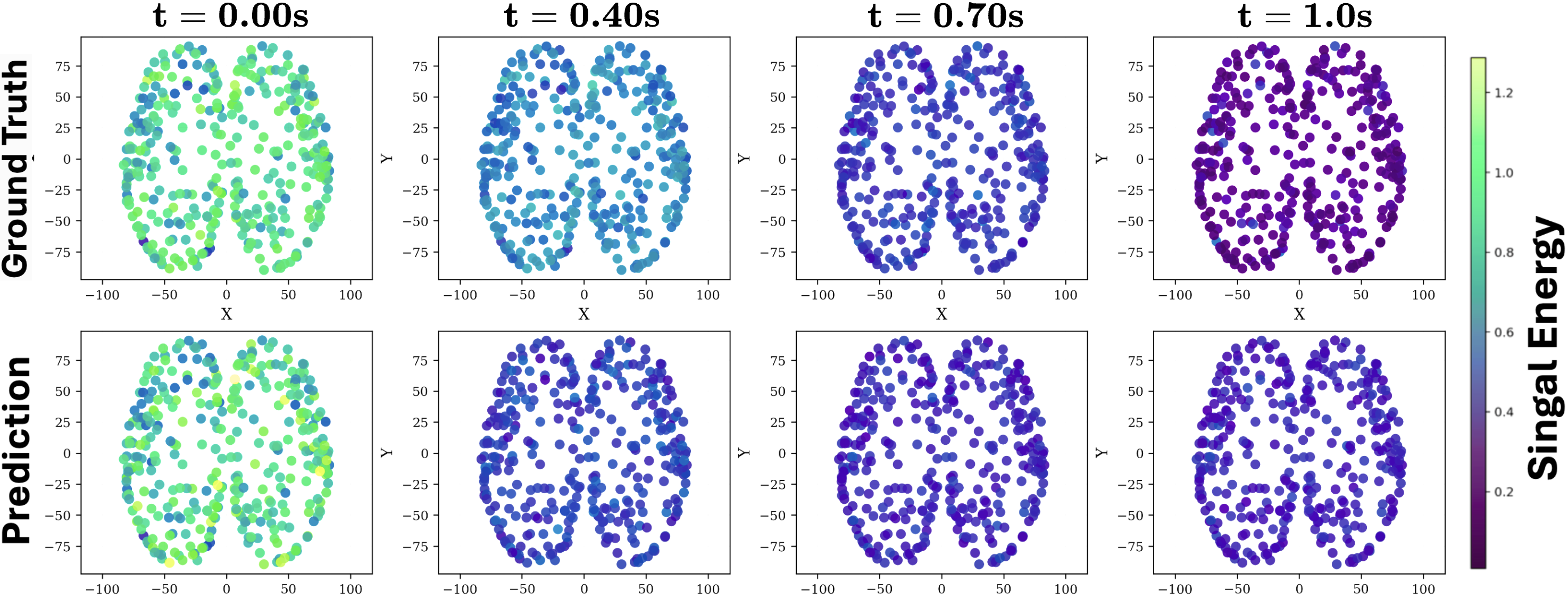}
    \caption{\textbf{Interpolation results of TSB-VE-S$^3$GNN}. Our model successfully produces the intermediate energy distributions between the initial and ending brain signals.}
    \label{fig:brain}
\end{figure}

\subsection{Topological Interpolation}\label{sec:tsbm}
In this experiment, we show S$^3$GNN can be a strong backbone model for topological interpolation. Specifically, we consider the topological Schrodinger bridge matching (TSBM) model proposed in \citep{yang2025topological}, in which we are interested in predicting the intermediate energy distributions between high and low energy brain signals \citep{van2013wu}. The training of TSBM is conducted by using an alternating scheme and 
both forward and backward processes governed by the graph stochastic heat diffusion equation $dX_t = -cLX_tdt + g_tdW_t$. Fixing $X_T \sim p_T$ (prior) and $X_0 \sim p_0$ (posterior), TSBM seeks to fit two GNNs $\phi_\sharp$ and $\phi_\Box$ for forward and backward SDEs for brain signals,
\begin{align}\label{eq:tsbm}
    dX_t = (-cLX_t + g_t^2\nabla_x\log\phi_\sharp(t)(X_t))dt +g_t dW_t.
\end{align}
Here, $\nabla_x$ is the gradient of the feature vectors and the backward SDE can be obtained by replacing $\phi_\sharp$ with $\phi_\Box$. Our motivation for deploying S$^3$GNN to this task is grounded in neuroscience evidence that transitions of brain signals across different energy states are governed by large-scale functional connectivity and long-range interactions among distant regions, thereby necessitating explicit global information exchange during propagation \citep{greicius2003functional,bassett2006small}. Accordingly, one would expect GNNs that propagate node features in a global manner to be a better model in terms of approximating the score functions (e.g.,$\nabla_x\log\phi_\sharp(t)(X_t))$ ) in Eq.~\eqref{eq:tsbm}. The results are presented in Table~\ref{tab:tsb_brain_wasserstein} and Figure~\ref{fig:brain}. Rather than observing that S$^3$GNN assisted TSB models consistently outperform their GCN counterparts, one can also find that the differences between three types of TSB models, i.e., standard Brownian motion (BM), variance explode (VE) and variance perserving (VP), are relatively smaller than the performance gap induced by replacing the GCN backbone with S$^3$GNN. We conjecture that the improvements are related to the explicit global information exchange in S$^3$GNN, which is consistent with the long-range interactions in brain signals. This points to a promising direction of integrating task-suitable GNN backbones with (topological) generative modeling frameworks for different data domains.

\begingroup
\begin{table}[t]
\centering
\caption{Results (RMSE) on Cylinder Flow, the lower is better.}
\label{tab:cylinder_flow_summary}
\setlength{\tabcolsep}{10pt}
\renewcommand{\arraystretch}{1.25}
\scalebox{0.9}{
\begin{tabular}{lcc}
\rowcolor[HTML]{EFEFEF}
\toprule
\textbf{Metric} & \textbf{MGN} & \textbf{S$^3$MGN} \\
\midrule
Min Test Loss     & 0.173     & \textbf{0.102 }   ($41$\%$\downarrow$)\\
Min Train Loss    & 0.138     &\textbf{ 0.067   }($51$\%$\downarrow$) \\
Min Velocity RMSE & 0.000589 & \textbf{0.000333}   ($43$\%$\downarrow$)\\
\bottomrule
\end{tabular}}

\end{table}
\vspace{-25pt}
\begin{figure}[t]
    \centering
    \includegraphics[width=1\linewidth]{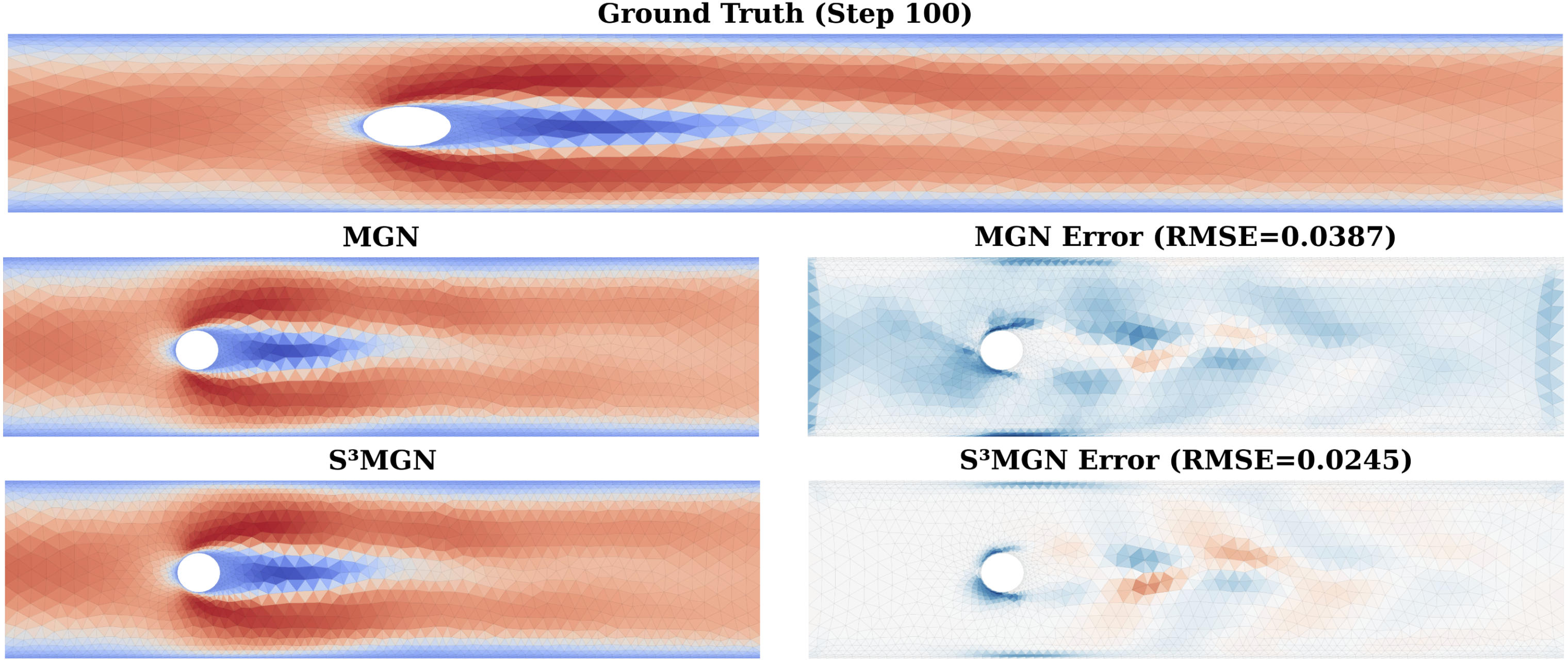}
    \caption{\textbf{Cylinder flow visualization (Step 100).} Top: ground-truth velocity field. Middle/Bottom: predictions from MGN and S$^3$MGN  with corresponding error maps.}
    \label{fig:fluid}
\end{figure}

\vspace{-6pt}
\begin{figure}[t]
    \centering
    \includegraphics[width=1\linewidth]{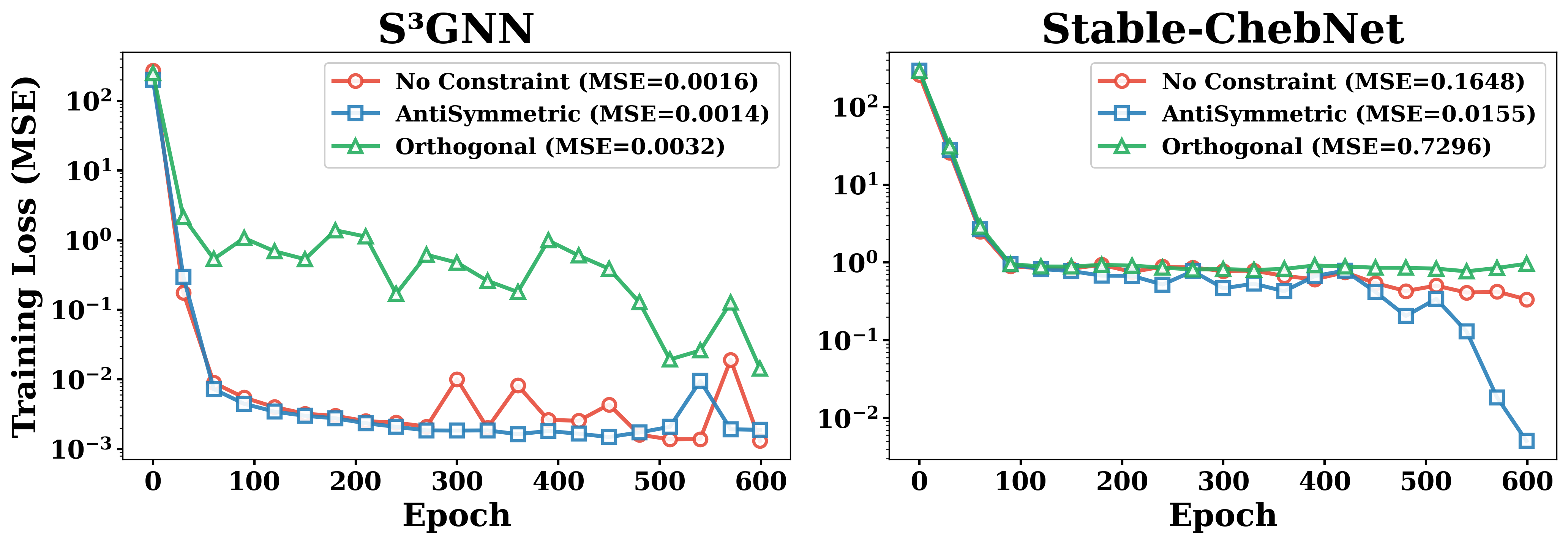}
    \caption{Ablation study in Barbell graph with different constraints.}
    \label{fig:ablation}
\end{figure}


\vspace{20pt}
\subsection{Mesh-based Fluid Dynamics Prediction}\label{sec:mesh_based}
We evaluate S$^3$GNN as a backbone for mesh-based fluid prediction on the cylinder-flow benchmark \citep{pfaff2020learning}. At each discrete time step, the spatially discretized flow field is represented as a mesh graph with fixed topology, where node features encode the instantaneous physical states. Given the state at time $t$, the model predicts per-node acceleration (or velocity increment) $\hat{a}_t$, and we obtain the next-step velocity by explicit integration:
$\hat{v}_{t+1} = v_t + \Delta t \cdot \hat{a}_t$. Our S$^3$MeshGraphNet (S$^3$MGN) follows the standard MeshGraphNet (MGN) encoder--processor--decoder pipeline, but replaces the processor blocks with S$^3$GNN layers. This design is motivated by the long-range couplings in cylinder wakes, where effective coordination between distant mesh regions is beneficial for stable rollout prediction.

We evaluate both one-step prediction and multi-step rollouts by repeatedly applying the learned dynamics. The quantitative comparison is reported in Table~\ref{tab:cylinder_flow_summary}, where S$^3$MGN consistently improves over the MGN baseline across training/test objectives and velocity RMSE. In addition, Figure~\ref{fig:fluid} visualizes a long-horizon rollout (Step 100), where S$^3$MGN yields a lower RMSE and noticeably reduced errors in the wake region compared to MGN.

\subsection{Ablation and Future Studies}\label{sec:ablation_future}
Finally, we conduct ablation studies on constraints imposed on $\mathbf W$, including a free $\mathbf W$, an antisymmetric $\widehat{\mathbf W}$ (as in S$^3$GNN), and an orthogonal $\mathbf W$, where the latter is expected to further tighten the bound of $\bigl\|\partial \mathbf h_i(\ell)/\partial \mathbf x_s\bigr\|$ as discussed in Section~\ref{sec:jacobian_stability}. As shown in Figure~\ref{fig:ablation}, both Stable-ChebNet and S$^3$GNN achieve the best performance with the antisymmetric constraint. Notably, the gap between free $\mathbf W$ and antisymmetric $\widehat{\mathbf W}$ is much smaller in S$^3$GNN than in Stable-ChebNet, suggesting that the benefit of constraining $\mathbf W$ depends on the propagation mechanism (e.g., spatial vs. spectral) and the task. Moreover, while orthogonality can more strongly tighten the bound of $\bigl\|\partial \mathbf h_i(\ell)/\partial \mathbf x_s\bigr\|$, it also substantially shrinks the feasible space of $\mathbf W$, which may hinder task-specific adaptation and lead to a more unstable optimization behavior. In contrast, antisymmetry appears to regulate the variation of the Jacobian in a more parameter-efficient way, providing a better stability–expressivity trade-off in this setting. These observations motivate future work on how to choose the appropriate degree of constraint for different propagation schemes and long-range learning tasks.

\section{Conclusion}
 In this work, we identified a gap between recent theoretical OSQ-mitigation guarantees and their practical attainability in graph neural networks. To address this issue, we proposed S$^3$GNN, a lightweight global–local propagation framework that achieves effective long-range information mixing without relying on restrictive spectral assumptions or expensive eigendecomposition. We further showed that standard stability constraints on feature transformations remain effective under the proposed dynamics, continuing to stabilize the Jacobian energy across depth. Extensive experiments across diverse domains demonstrate that S$^3$GNN delivers strong long-range performance with substantially reduced computational and parameter costs.

\clearpage 

\bibliography{ref}
\bibliographystyle{icml2026}

\clearpage
\onecolumn
\appendix

\section{Related Works}\label{append:related_works}

\paragraph{OSQ Mitigation Methods}
Oversquashing arises when information from exponentially many distant nodes is compressed into fixed-dimensional node representations, and has been recognized as a fundamental limitation of message-passing GNNs \citep{topping2021understanding,shi2023exposition,alon2020bottleneck}. 
Existing mitigation methods can be broadly categorized into three lines. 
(1) \textit{Spatial rewiring} directly modifies the graph connectivity (i.e., the adjacency) by adding shortcut edges, typically connecting nodes within an $r$-hop neighborhood \cite{topping2021understanding,giraldo2022understanding}. 
(2) \textit{Spectral rewiring} aims to add edges so as to improve global expansion-related properties of the graph, such as the spectral gap \cite{new_karhadkar2023fosr}, total effective resistance \cite{black2023understanding}, and commute time \cite{new_di2024does}. 
(3) \textit{Implicit rewiring} constructs better effective connectivity (e.g., denser, reweighted, or fully connected graphs) as a by-product of architectures originally motivated by other considerations, including graph transformers and their variants \cite{ying2021transformers,wu2022nodeformer,new_wu2023difformer}, diffusion-based MPNNs \cite{thorpe2022grand,chamberlain2021beltrami,toth2022capturing}, and multi-track feature propagation methods \cite{pei2024multitrack,new_choi2024panda}.
We refer the recent survey and positional paper on OSQ problem for more details \citep{shi2023exposition,arnaiz2025oversmoothing}.

\section{Theoretical Proofs}

\begin{proof}[Proof of Proposition~\ref{prop:influence_x_to_h_exist_prod}]
Fix a connected component $\mathcal C_r$ with $|\mathcal C_r|=n_r$ and take any $i,s\in \mathcal C_r$.
For convenience, write the layer operator in Eq.~\eqref{eq:s3_initial} as
\[
\mathbf H(\ell+1)=\mathbf M(\ell)\,\mathbf H(\ell)\,\mathbf W{(\ell)},
\qquad
\mathbf M(\ell):=\widehat{\mathbf A}+\mathbf P_\theta(\ell).
\]
By construction of the $0$-eigenspace basis $\{\mathbf v^{(r)}\}$ on connected components,
$\mathbf P_\theta(\ell)$ is block-constant on $\mathcal C_r$, hence for all $u,v\in \mathcal C_r$,
\begin{equation}
\label{eq:app_P_const}
\big(\mathbf P_\theta(\ell)\big)_{uv}=\frac{\alpha_{\theta,r}^{(\ell)}}{n_r},
\qquad
\big(\mathbf P_\theta(\ell)\big)_{uv}=0\ \text{if }u\notin\mathcal C_r\text{ or }v\notin\mathcal C_r.
\end{equation}
In particular, $\big(\mathbf M(\ell)\big)_{uv}\ge \big(\mathbf P_\theta(\ell)\big)_{uv}\ge 0$.

Let $\mathbf h_i{(\ell)}\in\mathbb R^d$ denote the feature of node $i$ at layer $\ell$ (a row vector),
and define $\mathbf J_{is}(\ell):=\frac{\partial \mathbf h_i{(\ell)}}{\partial \mathbf x_s}\in\mathbb R^{d\times d}$.
Since the forward map is linear in $\mathbf H(\ell)$ at each layer, repeated application of the chain rule gives
\begin{equation}
\label{eq:app_J_factor}
\mathbf J_{is}(\ell)
=
\Big(\mathbf M(\ell-1)\mathbf M(\ell-2)\cdots \mathbf M(0)\Big)_{is}
\cdot
\Big(\mathbf W(0)\mathbf W(1)\cdots \mathbf W(\ell-1)\Big),
\end{equation}
with $\mathbf J_{is}(0)=\mathbf I_d$ when $i=s$ and $\mathbf 0$ otherwise.

We now lower bound the scalar coefficient in \eqref{eq:app_J_factor}.
Because all entries of $\mathbf M(p)$ are nonnegative, for any $i,s\in \mathcal C_r$,
\[
\Big(\mathbf M(\ell-1)\cdots \mathbf M(0)\Big)_{is}
\;\ge\;
\Big(\mathbf P_\theta(\ell-1)\cdots \mathbf P_\theta(0)\Big)_{is}.
\]
Moreover, using \eqref{eq:app_P_const}, for every $p$ and every $u,v\in \mathcal C_r$,
$\big(\mathbf P_\theta(p)\big)_{uv}=\alpha_{\theta,r}^{(p)}/n_r$, hence
\[
\Big(\mathbf P_\theta(\ell-1)\cdots \mathbf P_\theta(0)\Big)_{is}
=
\prod_{p=0}^{\ell-1}\frac{\alpha_{\theta,r}^{(p)}}{n_r}.
\]
Combining the last two displays yields
\begin{equation}
\label{eq:app_scalar_lb}
\Big(\mathbf M(\ell-1)\cdots \mathbf M(0)\Big)_{is}
\;\ge\;
\prod_{p=0}^{\ell-1}\frac{\alpha_{\theta,r}^{(p)}}{n_r}.
\end{equation}

Finally, let
\[
\mathbf W_{\mathrm{tot}}(\ell):=\mathbf W(0)\mathbf W(1)\cdots \mathbf W(\ell-1).
\]
By the assumption $\sigma_{\min}(\mathbf W(p))>0$ for all $p$, we have
$\sigma_{\min}(\mathbf W_{\mathrm{tot}}(\ell))\ge \prod_{p=0}^{\ell-1}\sigma_{\min}(\mathbf W(p))$.

Then, by \eqref{eq:app_J_factor} and \eqref{eq:app_scalar_lb},  
\begin{align*}
\Bigl\|
\frac{\partial \mathbf h_i{(\ell)}}{\partial \mathbf x_s}  
\Bigr\|
&=
\bigl\|\mathbf J_{is}(\ell) \bigr\| \\
&=
\Big(\mathbf M(\ell-1)\cdots \mathbf M(0)\Big)_{is}\cdot
\bigl\|\mathbf W_{\mathrm{tot}}(\ell) \bigr\| 
\ge
\left(\prod_{p=0}^{\ell-1}\frac{\alpha_{\theta,r}^{(p)}}{n_r}\right)
\left(\prod_{p=0}^{\ell-1}\sigma_{\min}\!\big(\mathbf W(p)\big)\right).
\end{align*}
Multiplying the right-hand side by $\varepsilon^\ell$ (consistent with the $\varepsilon$-scaling in Eq.~\eqref{eq:s3_initial})
gives exactly \eqref{eq:influence_x_to_h_exist_prod}. The lower bound depends only on $n_r$ and the per-layer
parameters $\alpha_{\theta,r}^{(p)}$ and $\mathbf W(p)$, and is independent of the graph distance between $i$ and $s$
within $\mathcal C_r$. Furthermore, the lower bound for the model using different $\mathbf W$ for spatial and spectral parts can be obtained by following the same strategy, with additional consideration on the interactive terms between weight matrices; we omit it here.  
\end{proof}

\begin{proof}[Proof of Proposition~\ref{prop:stability}]
The proof follows the similar strategy in \citep{hariri2025return}. The core is the outer product between one antisymmetric matrix and a symmetric matrix is still antisymmetric, i.e, $\mathbf P_\theta \otimes \widehat{\mathbf W}$ and $\mathbf A\otimes \widehat{\mathbf W}$.

Let $\mathbf h(\ell)=\mathrm{vec}(\mathbf H(\ell))\in\mathbb R^{Nd}$.
Using the standard identity
\[
\mathrm{vec}(\mathbf S\,\mathbf H\,\mathbf W)
=
(\mathbf W^\top\!\otimes \mathbf S)\,\mathrm{vec}(\mathbf H),
\]
the dynamics in Eq.~\eqref{eq:s3_final} implies
\begin{align}
\mathbf h(\ell+1)
&=
\mathbf h(\ell)
+\epsilon\Big(
\mathrm{vec}\big(\mathbf P_\theta(\ell)\mathbf H(\ell)\widehat{\mathbf W}(\ell)\big)
+\mathrm{vec}\big(\widehat{\mathbf A}\mathbf H(\ell)\widehat{\mathbf W}(\ell)\big)
\Big) \notag\\
&=
\mathbf h(\ell)
+\epsilon\Big(
\widehat{\mathbf W}(\ell)^\top\!\otimes \mathbf P_\theta(\ell)
+\widehat{\mathbf W}(\ell)^\top\!\otimes \widehat{\mathbf A}
\Big)\mathbf h(\ell) \notag\\
&=
\Big(\mathbf I_{Nd}+\epsilon\,\mathbf A_{\mathrm{tot}}(\ell)\Big)\mathbf h(\ell),
\end{align}
where
\[
\mathbf A_{\mathrm{tot}}(\ell)
:=
\widehat{\mathbf W}(\ell)^\top\!\otimes\big(\mathbf P_\theta(\ell)+\widehat{\mathbf A}\big)
\in\mathbb R^{Nd\times Nd}.
\]
Therefore the layer Jacobian with respect to $\mathbf h(\ell)$ is
\[
\mathbf J(\ell)=\frac{\partial \mathbf h(\ell+1)}{\partial \mathbf h(\ell)}
=\mathbf I_{Nd}+\epsilon\,\mathbf A_{\mathrm{tot}}(\ell).
\]

We now show that $\mathbf A_{\mathrm{tot}}(\ell)$ is antisymmetric.
By construction, $\widehat{\mathbf W}(\ell)$ is antisymmetric, i.e.,
$\widehat{\mathbf W}(\ell)^\top=-\widehat{\mathbf W}(\ell)$.
Moreover, both $\mathbf P_\theta(\ell)$ and $\widehat{\mathbf A}$ are symmetric.
Hence $\mathbf P_\theta(\ell)+\widehat{\mathbf A}$ is symmetric, and using
$(\mathbf X\otimes \mathbf Y)^\top=\mathbf X^\top\otimes \mathbf Y^\top$ gives
\[
\mathbf A_{\mathrm{tot}}(\ell)^\top
=
\widehat{\mathbf W}(\ell)\otimes\big(\mathbf P_\theta(\ell)+\widehat{\mathbf A}\big)
=
-\widehat{\mathbf W}(\ell)^\top\!\otimes\big(\mathbf P_\theta(\ell)+\widehat{\mathbf A}\big)
=
-\mathbf A_{\mathrm{tot}}(\ell).
\]
Thus $\mathbf A_{\mathrm{tot}}(\ell)$ is antisymmetric.

Finally,
\begin{align}
\mathbf J(\ell)^\top\mathbf J(\ell)
&=
\big(\mathbf I_{Nd}+\epsilon\,\mathbf A_{\mathrm{tot}}(\ell)^\top\big)
\big(\mathbf I_{Nd}+\epsilon\,\mathbf A_{\mathrm{tot}}(\ell)\big) \notag\\
&=
\mathbf I_{Nd}
+\epsilon\big(\mathbf A_{\mathrm{tot}}(\ell)+\mathbf A_{\mathrm{tot}}(\ell)^\top\big)
+\epsilon^2\,\mathbf A_{\mathrm{tot}}(\ell)^\top\mathbf A_{\mathrm{tot}}(\ell) \notag\\
&=
\mathbf I_{Nd}+\epsilon^2\,\mathbf A_{\mathrm{tot}}(\ell)^\top\mathbf A_{\mathrm{tot}}(\ell),
\end{align}
where the linear term vanishes since $\mathbf A_{\mathrm{tot}}(\ell)^\top=-\mathbf A_{\mathrm{tot}}(\ell)$.
Because $\mathbf A_{\mathrm{tot}}(\ell)^\top\mathbf A_{\mathrm{tot}}(\ell)$ is symmetric positive semidefinite,
\[
\|\mathbf J(\ell)\|_2^2
=
\lambda_{\max}\!\big(\mathbf J(\ell)^\top\mathbf J(\ell)\big)
=
1+\epsilon^2\,\lambda_{\max}\!\big(\mathbf A_{\mathrm{tot}}(\ell)^\top\mathbf A_{\mathrm{tot}}(\ell)\big).
\]
Taking square roots and using $\sqrt{1+x}=1+\mathcal O(x)$ as $x\to 0$ yields
\[
\|\mathbf J(\ell)\|_2
=
1+\mathcal O(\epsilon^2),
\]
which proves the claimed layerwise Jacobian-energy stability.
\end{proof}

\section{Experiment Details and Additional Analysis}

\subsection{More Evidence on Jacobian Norm Distributions}\label{append:evidence}
To enrich the evidence on the Jacobian norm distribution difference between models, i.e., Figure~\ref{fig:osq_bound_compare}, which only shows one sample randomly from the test set. In Figure~\ref{fig:more_evidence}, we additionally plot ten more results with the graph randomly picked up from the test set. These samples are with the node number ranging from 64 to 456. We found that in all plots, S$^2$GNN's norm distributions are mixed \darkblue{\textbf{below-bound values}} and  \lightblue{\textbf{above-bound values}}, whereas S$ ^3$GNNs are all above the theoretical lower bounds, except in the graph \#356, where we observed around 5.7\% of the norms are below the lower bound. These evidence further suggests the violation between the real-implementation and theoretical conclusions, and verifies the effectiveness of our proposed S$^3$GNN model. 

\begin{figure}[t]
    \centering
    \includegraphics[width=1\linewidth]{jacobian_norm_more.png}
    \caption{Additional evidence to show the differences in terms of Jacobian norm distributions between three models.}
    \label{fig:more_evidence}
\end{figure}

\subsection{Baseline Description}
We evaluate our proposed model by comparing it to various baselines.
For standard graph learning tasks, including graph property prediction,
large-scale node classification, and long-range graph learning,
our baselines cover a broad spectrum of representative model families.

\paragraph{Graph Property Prediction.}
For graph-level property prediction tasks, we compare against classical and modern GNN baselines, including
GCN~\cite{kipf2016semi},
GAT~\cite{velivckovic2018graph},
GraphSAGE~\cite{hamilton2017inductive},
GIN~\cite{xu2018powerful},
GRAND~\cite{chamberlain2021grand},
A-DGN~\cite{gravina2022anti},
ChebNet~\cite{defferrard2016convolutional},
Stable-ChebNet~\cite{hariri2025return},
DIFFormer~\cite{wu2023difformer} and S$^2$GNN \cite{geisler2024spatio}.

\paragraph{Node Classification.}
For large-scale node classification on OGB datasets, we compare against widely used and recent models, including
GCN~\cite{kipf2016semi},
ChebNet~\cite{defferrard2016convolutional},
ChebNetII~\cite{he2022convolutional},
GraphSAGE~\cite{hamilton2017inductive},
GAT~\cite{velivckovic2018graph},
NodeFormer~\cite{wu2022nodeformer},
GraphGPS~\cite{rampavsek2022recipe},
GOAT~\cite{kong2023goat},
EXPHORMER+GCN~\cite{shirzad2023exphormer},
SPEXPHORMER~\cite{shirzad2024even},
DIFFormer~\cite{wu2023difformer},
and Stable-ChebNet~\cite{hariri2025return}.

\paragraph{Long-Range Graph Learning.}
For long-range graph benchmarks, we include representative baselines from multiple model families.
Specifically, we compare with transformer-based models including
SAN+LapPE~\cite{kreuzer2021rethinking},
TIGT~\cite{choi2024topology},
Specformer~\cite{bo2023specformer},
Exphormer~\cite{shirzad2024even},
G.MLPMixer~\cite{he2023generalization},
and GRIT~\cite{ma2023graph};
graph rewiring methods including
LASER~\cite{new_barbero2023locality}
and DRew-GCN~\cite{gutteridge2023drew};
state-space models including
Graph Mamba~\cite{wang2024graph}
and GMN~\cite{behrouz2024graph}.

\subsection{Hyperparameter Searching Space}
In this section, we provide hyperparameter searching spaces for each of the experiments. Table~\ref {tab:hyper-graph-property} to \ref{tab:lrgb-hyperparams} summarize the search spaces and the reference hyperparameters. We note that for the experiment in KGQA, topological interpolation, and the fulid dynamic prediction,  the hyperparameters for S$^3$GNN are fixed with step size $\epsilon = 0.1$, Dissipative force $\gamma = 0.01$ and initial $\alpha = 1$, while we keep the rest of the settings exactly the same as the original paper. Please see \citep{mavromatis2024gnn,yang2025topological,pfaff2020learning} for more implementation details. 

\begin{table}[t]
\centering
\caption{Hyperparameters for LRGB Experiments}
\label{tab:lrgb-hyperparams}
\begin{tabular}{lccl}
\toprule
\textbf{Hyperparameter} & \textbf{Peptides-struct} & \textbf{Peptides-func} & \textbf{Search Space} \\
\midrule
Hidden dimension & 256 & 128 & 128, 256, 512 \\
Number of Layers & 5 & 7 & 3, 5, 7 \\
MLP Layers & 2 & 2 & 1, 2, 3 \\
Batch Size & 64 & 32 & 32, 64, 128 \\
Learning Rate & 0.001 & 0.001 & 0.0005, 0.001, 0.005 \\
Step Size ($\varepsilon$) & 0.5 & 0.1 & 0.1, 0.5, 1.0 \\
Initial $\alpha$ & 1.0 & 1.0 & 1, 2, 3 \\
Dissipation ($g$) & 0.1 & 0.1 & 0.01, 0.05, 0.1 \\
Dropout & 0.15 & 0.2 & 0.1, 0.15, 0.2 \\
Activation & SiLU & SiLU & SiLU, ReLU, tanh \\
Epochs & 200 & 200 & -- \\
\bottomrule
\end{tabular}
\end{table}

\begin{table}[t]
\centering
\caption{Hyper-parameter grid for S³GNN on Graph Property Prediction tasks.}\label{tab:hyper-graph-property}
\begin{tabular}{lcccc}
\toprule
\textbf{Hyper-parameter} & \textbf{Diameter} & \textbf{SSSP} & \textbf{Eccentricity} & \textbf{Search Space} \\
\midrule
Hidden dim $d$          & 40        & 40        & 32       & 32, 40, 96 \\
Num of layers           & 6         & 6         & 12         & 3, 6, 12, 24 \\
Step size $\varepsilon$ & 0.7       & 0.1       & 0.7       & $0.1, 1.0$ \\
Dissipative force $\gamma$ & 0.01   & 0.01      & 0.01      & 0.001, 0.01, 0.05, 0.1 \\
$\alpha$ init           & 1.0       & 1.0       & 1.0       & 0.5, 1.0, 2.0 \\
Batch size              & 16        & 16        & 16        & 16, 32, 64 \\
Learning rate           & 0.001     & 0.003     & 0.001     & 0.001, 0.003, 0.01 \\
Weight decay            & $10^{-6}$ & $10^{-6}$ & $10^{-6}$ & $10^{-6}$, $10^{-5}$, $10^{-4}$ \\
Epochs                  & 1000      & 1000      & 1000      & 1000 \\
\bottomrule
\end{tabular}
\end{table}

\begin{table}[t]
\centering
\caption{Hyperparameters for OGB Node Classification Experiments}
\label{tab:ogb-hyperparams}
\begin{tabular}{lccl}
\toprule
\textbf{Hyper-parameter} & \textbf{OGBN-Proteins} & \textbf{OGBN-ArXiv} & \textbf{Search Space} \\
\midrule
Hidden Channels & 256 & 256 & 128, 256, 512, 1024 \\
Number of Layers & 10 & 5 & 2, 3, 4, 5, 7, 10 \\
Step Size ($\varepsilon$) & 0.1 & 0.1 & 0.1, 1.0 \\
Dissipation ($g$) & 0.1 & 0.1 & 0.01, 0.05, 0.1 \\
Initial $\alpha$ & 1.0 & 1.0 & 1, 2, 3 \\
Learning Rate & 0.001 & 0.001 & 0.0005, 0.001, 0.005, 0.01 \\
Batch Size & 1024 & 1024 & 256, 512, 1024 \\
Dropout & 0.1 & 0.2 & 0.1,0.2,0.3 \\
Activation & tanh & ReLU & tanh, ReLU \\
\bottomrule
\end{tabular}
\end{table}

\section{Performance on Heterophily Graphs}
\label{append:hetero}
In this section, we evaluate S$^3$GNN performance on heterophily graphs. We follow the settings in \citep{behrouz2024graph,hariri2025return}. Specifically, four node classification tasks in the datasets of \textbf{Roman-empire}, \textbf{Amazon-ratings} and \textbf{Minesweeper}, and \textbf{Tolokers} are considered. We followed the standard data split for these datasets and the results are presented in Table~\ref{tab:heterophilic_benchmark}, we note that some of the results are copied based on the online available results \citep{behrouz2024graph,hariri2025return}. One can see that although with a stronger global mixing effect than classic GCN, causing better information transaction, S$^3$GNN still shows state-of-the-art results for all datasets.

\begin{table*}[t]
\centering
\caption{Mean test set score and std averaged over 4 random weight initializations on heterophilic datasets. The higher, the better.}
\label{tab:heterophilic_benchmark}
\scalebox{0.8}{
\begin{tabular}{lcccc}
\toprule
\textbf{Model} & \textbf{Roman-empire} \textbf{Acc}$\uparrow$ & \textbf{Amazon-ratings} \textbf{Acc}$\uparrow$ & \textbf{Minesweeper} \textbf{AUC}$\uparrow$ & \textbf{Tolokers AUC}$\uparrow$ \\
\midrule
MLP-2 & $66.04\pm0.71$ & $49.55\pm0.81$ & $50.92\pm1.25$ & $74.58\pm0.75$ \\
SGC-1 & $44.60\pm0.52$ & $40.69\pm0.42$ & $82.04\pm0.77$ & $73.80\pm1.35$ \\
MLP-1 & $64.12\pm0.61$ & $38.60\pm0.41$ & $50.59\pm0.83$ & $71.89\pm0.82$ \\
\midrule

\multicolumn{5}{l}{\textbf{MPNNs}} \\
GAT & $80.87\pm0.30$ & $49.09\pm0.63$ & $92.01\pm0.68$ & $83.70\pm0.47$ \\
GAT (LapPE) & $84.80\pm0.46$ & $44.90\pm0.73$ & $93.50\pm0.54$ & $84.99\pm0.54$ \\
GAT (RWSE) & $86.62\pm0.53$ & $48.58\pm0.41$ & $92.53\pm0.65$ & $85.02\pm0.67$ \\
Gated-GCN & $74.46\pm0.54$ & $43.00\pm0.32$ & $87.54\pm1.22$ & $77.31\pm1.14$ \\
GCN & $73.69\pm0.74$ & $48.70\pm0.63$ & $89.75\pm0.52$ & $83.64\pm0.67$ \\
GCN (LapPE) & $83.37\pm0.55$ & $44.35\pm0.36$ & $94.26\pm0.49$ & $84.95\pm0.78$ \\
GCN (RWSE) & $84.84\pm0.55$ & $46.40\pm0.55$ & $93.84\pm0.48$ & $85.11\pm0.77$ \\
CO-GNN$(\Sigma,\Sigma)$ & $91.57\pm0.32$ & $51.28\pm0.56$ & $95.09\pm1.18$ & $83.36\pm0.89$ \\
CO-GNN$(\mu,\mu)$ & $91.37\pm0.35$ & $54.17\pm0.37$ & $97.31\pm0.41$ & $84.45\pm1.17$ \\
SAGE & $85.74\pm0.67$ & $53.63\pm0.39$ & $93.51\pm0.57$ & $82.43\pm0.44$ \\
\midrule

\multicolumn{5}{l}{\textbf{Graph Transformers}} \\
Exphormer & $89.03\pm0.37$ & $53.51\pm0.46$ & $90.74\pm0.53$ & $83.77\pm0.78$ \\
NAGphormer & $74.34\pm0.77$ & $51.26\pm0.72$ & $84.19\pm0.66$ & $78.32\pm0.95$ \\
GOAT & $71.59\pm1.25$ & $44.61\pm0.50$ & $81.09\pm1.02$ & $83.11\pm1.04$ \\
GPS & $82.00\pm0.61$ & $53.10\pm0.42$ & $90.63\pm0.67$ & $83.71\pm0.48$ \\
GPS$_{\text{GCN+Performer}}$ (LapPE) & $83.96\pm0.53$ & $48.20\pm0.67$ & $93.85\pm0.41$ & $84.72\pm0.77$ \\
GPS$_{\text{GCN+Performer}}$ (RWSE) & $84.72\pm0.65$ & $48.08\pm0.85$ & $92.88\pm0.50$ & $84.81\pm0.86$ \\
GT & $86.51\pm0.73$ & $51.17\pm0.66$ & $91.85\pm0.76$ & $83.23\pm0.64$ \\
GT-sep & $87.32\pm0.39$ & $52.18\pm0.80$ & $92.29\pm0.47$ & $82.52\pm0.92$ \\
Polynomial & $\textbf{92.55}\pm\textbf{0.30}$ & $\textbf{54.81}\pm\textbf{0.49}$ & $97.46\pm0.36$ & $85.91\pm0.74$ \\
\midrule

\multicolumn{5}{l}{\textbf{Heterophily-Designated GNNs}} \\
CPGNN & $63.96\pm0.62$ & $39.79\pm0.77$ & $52.03\pm5.46$ & $73.36\pm1.01$ \\
FAGCN & $65.22\pm0.56$ & $44.12\pm0.30$ & $88.17\pm0.73$ & $77.75\pm1.05$ \\
FSGNN & $79.92\pm0.56$ & $52.74\pm0.83$ & $90.08\pm0.70$ & $82.76\pm0.61$ \\
GBK-GNN & $74.57\pm0.47$ & $45.98\pm0.71$ & $90.85\pm0.58$ & $81.01\pm0.67$ \\
GloGNN & $59.63\pm0.69$ & $36.89\pm0.14$ & $51.08\pm1.23$ & $73.39\pm1.17$ \\
GPR-GNN & $64.85\pm0.27$ & $44.88\pm0.34$ & $86.24\pm0.61$ & $72.94\pm0.97$ \\
H2GCN & $60.11\pm0.52$ & $36.47\pm0.23$ & $89.71\pm0.31$ & $73.35\pm1.01$ \\
JacobiConv & $71.14\pm0.42$ & $43.55\pm0.48$ & $89.66\pm0.40$ & $68.66\pm0.65$ \\
\midrule

\multicolumn{5}{l}{\textbf{Graph SSMs}} \\
GMN & $87.69\pm0.50$ & $54.07\pm0.31$ & $91.01\pm0.23$ & $84.52\pm0.21$ \\
GPS + Mamba & $83.10\pm0.28$ & $45.13\pm0.97$ & $89.93\pm0.54$ & $83.70\pm1.05$ \\
GRAMA$_{\text{GCN}}$ & $88.61\pm0.43$ & $53.48\pm0.62$ & $95.27\pm0.71$ & $\textbf{86.23}\pm\textbf{1.10}$ \\
MP-SSM & $90.91\pm0.48$ & $53.65\pm0.71$ & $95.33\pm0.72$ & $85.26\pm0.93$ \\
\midrule
S$^3$GNN(\textbf{ours}) & $91.52\pm0.15$ & $53.46\pm0.70$ & $\mathbf{97.58}\pm\mathbf{0.44}$ & $85.89\pm1.05$ \\
\bottomrule
\end{tabular}}
\end{table*}





\end{document}